%%%%%%%% ICML 2020 EXAMPLE LATEX SUBMISSION FILE %%%%%%%%%%%%%%%%%

\documentclass{article}

% Recommended, but optional, packages for figures and better typesetting:
\usepackage{amsfonts}
\usepackage{amsmath}
\usepackage{amsthm}
\usepackage{xcolor}
\usepackage{comment}
\usepackage{wrapfig}
\usepackage{float}
\usepackage{placeins}
\usepackage{bbm}
\usepackage{microtype}
\usepackage{graphicx}
\usepackage{subfigure}
\usepackage{booktabs} % for professional tables

% hyperref makes hyperlinks in the resulting PDF.
% If your build breaks (sometimes temporarily if a hyperlink spans a page)
% please comment out the following usepackage line and replace
% \usepackage{icml2020} with \usepackage[nohyperref]{icml2020} above.
\usepackage{hyperref}
% Attempt to make hyperref and algorithmic work together better:

% Use the following line for the initial blind version submitted for review:

\newcommand{\cov}{\mathbb{C}\text{ov}}
\newcommand{\var}{\mathbb{V}\text{ar}}

\newcommand{\G}{\mathcal{G}}
\newcommand{\F}{\mathcal{F}}

\newcommand{\X}{\mathcal{X}}
\newcommand{\Y}{\mathcal{Y}}
\newcommand{\E}{\mathbb{E}}
\newcommand{\R}{\mathbb{R}}

\makeatletter
\newcommand*{\indep}{%
  \mathbin{%
    \mathpalette{\@indep}{}%
  }%
}
\newcommand*{\nindep}{%
  \mathbin{%                   % The final symbol is a binary math operator
    \mathpalette{\@indep}{\not}% \mathpalette helps for the adaptation
                               % of the symbol to the different math styles.
  }%
}
\newcommand*{\@indep}[2]{%
  % #1: math style
  % #2: empty or \not
  \sbox0{$#1\perp\m@th$}%        box 0 contains \perp symbol
  \sbox2{$#1=$}%                 box 2 for the height of =
  \sbox4{$#1\vcenter{}$}%        box 4 for the height of the math axis
  \rlap{\copy0}%                 first \perp
  \dimen@=\dimexpr\ht2-\ht4-.2pt\relax
      % The equals symbol is centered around the math axis.
      % The following equations are used to calculate the
      % right shift of the second \perp:
      % [1] ht(equals) - ht(math_axis) = line_width + 0.5 gap
      % [2] right_shift(second_perp) = line_width + gap
      % The line width is approximated by the default line width of 0.4pt
  \kern\dimen@
  {#2}%
      % {\not} in case of \nindep;
      % the braces convert the relational symbol \not to an ordinary
      % math object without additional horizontal spacing.
  \kern\dimen@
  \copy0 %                       second \perp
}

\newtheorem{theorem}{Theorem}
\newtheorem{lemma}[theorem]{Lemma}
\newtheorem{definition}[theorem]{Definition}

\newtheorem{corollary}[theorem]{Corollary}

% \usepackage{icml2020}

% If accepted, instead use the following line for the camera-ready submission:
\usepackage[accepted]{icml2020}

% The \icmltitle you define below is probably too long as a header.
% Therefore, a short form for the running title is supplied here:

% \usepackage{algorithm}
% \usepackage[options ]{algorithm2e}
% \usepackage[noend]{algpseudocode}
% \usepackage[linesnumbered,ruled,vlined]{algorithm2e}

% \icmltitlerunning{Robust learning with the Hilbert-Schmidt independence criterion}

\begin{document}

\twocolumn[
\icmltitle{Robust Learning with the Hilbert-Schmidt Independence Criterion}

% It is OKAY to include author information, even for blind
% submissions: the style file will automatically remove it for you
% unless you've provided the [accepted] option to the icml2020
% package.

% List of affiliations: The first argument should be a (short)
% identifier you will use later to specify author affiliations
% Academic affiliations should list Department, University, City, Region, Country
% Industry affiliations should list Company, City, Region, Country

% You can specify symbols, otherwise they are numbered in order.
% Ideally, you should not use this facility. Affiliations will be numbered
% in order of appearance and this is the preferred way.
\icmlsetsymbol{equal}{*}

\begin{icmlauthorlist}
\icmlauthor{Daniel Greenfeld}{technion}
\icmlauthor{Uri Shalit}{technion}

\end{icmlauthorlist}

\icmlaffiliation{technion}{Technion Institute of
Technology, Haifa, Israe}

\icmlcorrespondingauthor{Daniel Greenfeld}{danielgreenfeld3@gmail.com}
\icmlcorrespondingauthor{Uri Shalit}{urishalit@technion.ac.il}

% You may provide any keywords that you
% find helpful for describing your paper; these are used to populate
% the "keywords" metadata in the PDF but will not be shown in the document
\icmlkeywords{Machine Learning, ICML}

\vskip 0.3in
]

% this must go after the closing bracket ] following \twocolumn[ ...

% This command actually creates the footnote in the first column
% listing the affiliations and the copyright notice.
% The command takes one argument, which is text to display at the start of the footnote.
% The \icmlEqualContribution command is standard text for equal contribution.
% Remove it (just {}) if you do not need this facility.

\printAffiliationsAndNotice{}  % leave blank if no need to mention equal contribution
% \printAffiliationsAndNotice{\icmlEqualContribution} % otherwise use the standard text.
% \input{header}

\begin{abstract}
\label{abstract}
We investigate the use of a non-parametric independence measure, the Hilbert-Schmidt Independence Criterion (HSIC), as a loss-function for learning robust regression and classification models. This loss-function encourages learning models where the distribution of the residuals between the label and the model prediction is statistically independent of the distribution of the instances themselves. This loss-function was first proposed by \citet{mooij2009regression} in the context of learning causal graphs. We adapt it to the task of learning for unsupervised covariate shift: learning on a source domain without access to any instances or labels from the unknown target domain, but with the assumption that $p(y|x)$  (the conditional probability of labels given instances) remains the same in the target domain. We show that the proposed loss is expected to give rise to models that generalize well on a class of target domains characterised by the complexity of their description within a reproducing kernel Hilbert space. Experiments on unsupervised covariate shift tasks  demonstrate that models learned with the proposed loss-function outperform models learned with standard loss functions, achieving state-of-the-art results on a challenging cell-microscopy unsupervised covariate shift task.
\end{abstract}

\section{Introduction}
\label{Introduction}
In recent years there has been much interest in methods for learning \emph{robust models}: models that are learned using certain data but perform well even on data drawn from a distribution which is different from the training distribution. This interest stems from demand for models which can perform under conditions of transfer learning and domain adaptation \citep{rosenfeld2018elephant}. This is especially relevant as training sets such as labeled image collections are often restricted to a certain setting, time or place, while the learned models are expected to generalize to cases which are beyond the specifics of how the training data was collected.

More specifically, we consider the following learning problem, called \emph{unsupervised covariate shift}. Let $(X,Y)$ be a pair of random variables such that $X \in \X$ and  $Y\in \Y \subset \R$,  with a joint distribution $P_{\text{source}}(X,Y)$, such that $X$ are the instances and $Y$ the labels. Our goal is, given a training set drawn from $P_{\text{source}}(X,Y)$, to learn a model predicting $Y$ from $X$ that works well on a different, a-priori unknown target distribution $P_{\text{target}}(X,Y)$.
In a covariate shift scenario, the assumption is that $P_{\text{target}}(Y\mid X) = P_{\text{source}}(Y\mid X)$ but the marginal distribution $P_{\text{target}}(X)$ can change between source and target. We focus on \emph{unsupervised} covariate shift, where we have no access to samples $X$ or $Y$ from the target domain.

In this paper we propose using a loss function inspired by work in the causal inference community. 
We consider a model in which the relation between the instance $X$ and its label $Y$ is of the form:
\begin{align}\label{eq:model}
    Y=f^\star(X) + \varepsilon,\quad \varepsilon \indep X,
\end{align}
where the variable $\varepsilon$ denotes noise which is independent of the distribution of the random variable $X$. 

Given a well-specified model family and enough samples, one can learn $f^\star$, in which case there is no need to worry about covariate shift. However, in many realistic cases we cannot expect to have the true model in our model class, nor can we expect to have enough samples to learn the true model even if it is in our model class. While traditional methods rely on unlabeld data from the target domain to reason about $P_{\text{target}}(X)$, throughout this work we do not assume that we have \textit{any} samples from a test distribution, nor that the model is well-specified.

The basic idea presented in this paper is as follows: by Eq. \eqref{eq:model} we have $Y-f^\star(X) \indep X$. Standard loss functions aim to learn a model $\hat{f}$ such that $\hat{f}(X) \approx Y$, or such that $\hat{p}(Y|X)$ is high. We follow a different approach: learning a model $\hat{f}$ such that $Y - \hat{f}(X) $ is approximately independent of the distribution of $X$. Specifically, we propose  measuring independence using the Hilbert Schmidt Independence Criterion (HSIC): a non-parametric method that does not assume a specific noise distribution for $\varepsilon$ \citep{gretton2005measuring,gretton2008kernel}. 
This approach was first proposed by \citet{mooij2009regression} in the context of causal inference. As \citet{mooij2009regression} point out, this approach can be contrasted with learning with loss functions such as the squared-loss or absolute-loss, which implicitly assume that $\varepsilon$ has, respectively, a Gaussian or Laplacian distribution.

Intuitively, covariate shift is most harmful when the target distribution has more mass on areas of $\mathcal{X}$ on which the learned model performs badly. Thus, being robust against unsupervised covariate shift means having no certain sub-population (of positive measure) on which the model performs particularly badly. This of course might come at a certain cost to the performance on the known source distribution.

The following toy example showcases that standard loss functions do not necessarily incentivize such behaviour. Suppose $\mathcal{X}=\{0,1\}$, and $Y=X$. Let $P(X=0)=\varepsilon$, and consider the following hypothesis class: $\mathcal{H}=\{h_1,h_2\}$, where $h_1(x)=1$ for all $x$, and $h_2(x)=x-0.01$. For small values of $\varepsilon$, an algorithm minimizing the mean squared error (or any standard loss) will output $h_1$. An algorithm relying on the independence criteria on the other hand will output $h_2$, since its residuals are independent of $X$. Which is preferable? That depends on the target (test) distribution. If it is the same as the training distribution, $h_1$ is indeed a better choice. However, if we do not know the target distribution, then $h_2$ might be the better choice since it will always incur relatively small loss, as opposed to $h_1$ which might have very poor performance, say if $P(X=0)$ and $P(X=1)$ are switched during test time.
While pursuing robustness against any change in $P(X)$ is difficult, we will show below that learning with the HSIC-loss provides a natural trade-off between the generalization guarantees on the unknown target, and the complexity of the changes in the target relative to the source distribution.

Our contributions relative to the first proposal by \citet{mooij2009regression} are as follows:
\begin{enumerate}
    \item We prove that the HSIC objective is learnable for an hypothesis class of bounded Rademacher complexity.
    \item We prove that minimizing the HSIC-loss minimizes a worst-case loss over a class of unsupervised covariate shift tasks.
    \item We provide experimental validation using both linear models and deep networks, showing that learning with the HSIC-loss is competitive on a variety of unsupervised covariate shift benchmarks. 
    \item We provide code, including a PyTorch \citep{paszke2019pytorch} class for the HSIC-loss\footnote{\href{https://github.com/danielgreenfeld3/XIC}{https://github.com/danielgreenfeld3/XIC}.}.
\end{enumerate}

\section{Background and Setup}
\label{Background}
The  Hilbert-Schmidt independence criterion (HSIC), introduced by \citet{gretton2005measuring,gretton2008kernel}, is a useful method for testing if two random variables are independent. We give its basics below. 

The root of the idea is that while $\cov (A,B)=0$  does not imply that two random variables $A$ and $B$ are independent, having $\cov (s(A),t(B))=0$ for all bounded continuous functions $s$ and $t$ \textbf{does} actually imply independence \citep{renyi1959measures}. Since going over all bounded continuous functions is not tractable, \citet{gretton2005kernel} propose evaluating $\sup_{s\in \F, t\in \G} \cov\left[s(x), t(y) \right]$ where $\F,\G$ are universal Reproducing Kernel Hilbert Spaces (RKHS). This allows for a tractable computation and is equivalent in terms of the independence property. \citet{gretton2005measuring} then introduced HSIC as an upper bound to the measure introduced by \citet{gretton2005kernel}, showing it has superior performance and is easier to work with statistically and algorithmically. 

\subsection{RKHS Background}
A reproducing kernel Hilbert space $\F$ is a Hilbert space of functions from $\mathcal{X}$ to $\mathbb{R}$ with the following (reproducing) property: there exist a positive definite kernel $K:\mathcal{X}\times\mathcal{X} \to \mathbb{R}$ and a mapping function $\phi$ from $\mathcal{X}$ to $\F$ s.t. $K(x_1,x_2)=\langle\phi(x_1),\phi(x_2)\rangle_\F$.
Given two separable (having a complete orthonormal basis) RKHSs $\F$ and $\G$ on metric spaces $\mathcal{X}$ and $\mathcal{Y}$, respectively, and a linear operator $C:\F\to\G$, the Hilbert-Schmidt norm of $C$ is defined as follows:
\begin{align*}
    \|C\|_{\text{HS}}=\sum_{i,j}\langle Cu_i,v_j\rangle_\G^2,
\end{align*}
where $\{u_i\}$ and $\{u_i\}$ are some orthonormal basis for $\F$ and $\G$ respectively. Here we consider two probability spaces $\mathcal{X}$ and $\mathcal{Y}$ and their corresponding RKHSs $\F$ and $\G$. The mean elements $\mu_x$ and $\mu_y$ are defined such that $\langle \mu_x,s \rangle_\F := \E[\langle \phi(x),s\rangle_\F] = \E[s(x)]$, and likewise $\langle \mu_y,t \rangle_\G := \E[\langle \psi(y),t\rangle_\G] = \E[t(y)]$, where $\psi$ is the embedding from $\mathcal{Y}$ to $\G$. Notice that we can compute the norms of those operators quite easily: $\|\mu_x\|_\F^2=\E[K(x_1,x_2)]$ where the expectation is done over i.i.d. samples of pairs from $\mathcal{X}$.
For $s\in\F$ and $t\in\G$, their tensor product $s\otimes t:\G\to\F$ is defined as follows:
% \begin{align*}
    $(s\otimes t)(h)= \langle t,h \rangle_\G \cdot s.$
% \end{align*}
The Hilbert-Schmidt norm of the tensor product can be shown to be given by $\|s\otimes t\|_\text{HS}^2=\|s\|_\F^2\cdot\|t\|_\G^2$.
Equipped with these definitions, we are ready to define the cross covariance operator $C_{xy}:\G\to\F$:
\begin{align*}
    C_{xy}=\E[\phi(x)\otimes\psi(y)]-\mu_x\otimes\mu_y.
\end{align*}

\subsection{HSIC}
Consider two random variables $X$ and $Y$, residing in two metric spaces $\mathcal{X}$ and $\mathcal{Y}$ with a joint distribution on them, and two separable RKHSs $\F$ and $\G$ on $\mathcal{X}$ and $\mathcal{Y}$ respectively. HSIC is defined as the Hilbert Schmidt norm of the cross covariance operator:
\begin{align*}
    \text{HSIC}(X,Y;\F,\G)\equiv \|C_{xy}\|_\text{HS}^2.
\end{align*}

\citet{gretton2005measuring} show that:
\begin{align}\label{eq:hsiccoco}
     \text{HSIC}(X,Y;\F,\G) \geq  \sup_{s\in \F, t\in \G} \cov\left[s(x), t(y) \right],
\end{align}
an inequality which we use extensively for our results.

We now state Theorem 4 of \citet{gretton2005measuring}) which shows the properties of HSIC as an independence test:
\begin{theorem}[\citet{gretton2005measuring}, Theorem 4]  \label{thm:1}
Denote by $\F$ and $\G$ RKHSs both with universal kernels, $k,l$ respectively on compact domains $\mathcal{X}$ and $\mathcal{Y}$. Assume without loss of generality that $\|s\|_\infty \le 1$ for all $s\in\F$ and likewise $\|t\|_\infty \le 1$ for all $t\in\G$.

Then the following holds: $\|C_{xy}\|_\text{HS}^2 = 0 \Leftrightarrow X\indep Y$.
\end{theorem}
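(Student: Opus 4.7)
The plan is to prove the two directions separately, combining the inequality in (\ref{eq:hsiccoco}) with the universality assumption on the kernels $k, l$.

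For the easy direction $X \indep Y \Rightarrow \|C_{xy}\|_\text{HS}^2 = 0$, I would expand the Hilbert--Schmidt norm as a sum over orthonormal bases $\{u_i\} \subset \F, \{v_j\} \subset \G$ using the definition given in the excerpt. Combining the definition of the cross-covariance operator $C_{xy} = \E[\phi(X)\otimes\psi(Y)] - \mu_x \otimes \mu_y$, the definition of the mean elements, and the reproducing property $s(x) = \langle \phi(x), s\rangle_\F$, each coefficient reduces to $\cov[u_i(X), v_j(Y)]$. Independence of $X$ and $Y$ forces every such covariance to vanish, and hence the whole sum.

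For the converse $\|C_{xy}\|_\text{HS}^2 = 0 \Rightarrow X \indep Y$, inequality (\ref{eq:hsiccoco}) gives
\[0 = \|C_{xy}\|_\text{HS}^2 \geq \sup_{s\in\F,\, t\in\G} \cov[s(X), t(Y)].\]
Since $\F$ and $\G$ are linear subspaces, replacing $s$ by $-s$ shows that $\cov[s(X), t(Y)] = 0$ for \emph{every} $s \in \F$ and $t \in \G$. Now I invoke universality: because $k, l$ are universal on the compact domains $\mathcal{X}, \mathcal{Y}$, the spaces $\F, \G$ are dense in $C(\mathcal{X}), C(\mathcal{Y})$ with respect to the uniform norm. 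Given any bounded continuous $s' \in C(\mathcal{X})$ and $t' \in C(\mathcal{Y})$, I pick $s \in \F, t \in \G$ with $\|s-s'\|_\infty, \|t-t'\|_\infty \leq \epsilon$ and apply the elementary bound
\[\bigl|\cov[s'(X), t'(Y)] - \cov[s(X), t(Y)]\bigr| \leq 2\bigl(\|s'\|_\infty \|t - t'\|_\infty + \|t\|_\infty \|s - s'\|_\infty\bigr),\]
which is $O(\epsilon)$ since compactness ensures $s', t$ are bounded. Letting $\epsilon \to 0$ gives $\cov[s'(X), t'(Y)] = 0$ for all bounded continuous $s', t'$, and the classical result of Rényi (1959), cited at the start of the Background section, then yields $X \indep Y$.

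The main obstacle is the converse direction and specifically the density-based lifting from covariances on $\F \times \G$ to covariances on arbitrary pairs of bounded continuous functions. Two technicalities must be handled carefully: compactness of $\mathcal{X}, \mathcal{Y}$ is used so that continuous functions are automatically bounded (making products integrable and the covariance continuous in $\|\cdot\|_\infty$), and the normalisation $\|s\|_\infty, \|t\|_\infty \leq 1$ in the hypothesis is convenient for keeping the constants in the approximation step uniform. The forward direction, by contrast, is essentially a bookkeeping calculation once the cross-covariance is expanded in coordinates.
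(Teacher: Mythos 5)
The paper does not prove this statement: Theorem~\ref{thm:1} is imported verbatim from \citet{gretton2005measuring} as a known result, so there is no in-paper proof to compare against. Your argument is correct and is essentially the canonical one from that reference: the forward direction by expanding $\|C_{xy}\|_\text{HS}^2$ into the coefficients $\cov[u_i(X),v_j(Y)]$, and the converse by using Eq.~\eqref{eq:hsiccoco} (together with linearity of $\F,\G$) to conclude $\cov[s(X),t(Y)]=0$ on all of $\F\times\G$, then lifting to all bounded continuous functions via universality and the uniform-norm continuity of covariance on compact domains, and finally invoking the R\'enyi-type characterization of independence that the paper itself cites in Section~\ref{Background}.
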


Let $\{(x_i,y_i)\}_{i=1}^n$ be i.i.d. samples from the joint distribution on $\X \times \Y$. The empirical estimate of HSIC is given by:
\begin{align}\label{eq:emphsic}
    \widehat{\text{HSIC}}\{(x_i,y_i)\}_{i=1}^n;\F,\G) = \frac{1}{(n-1)^2} \textbf{tr} KHLH,
\end{align}
where $K_{i,j}=k(x_i,x_j)$, $L_{i,j}=l(y_i,y_j)$ are kernel matrices for the kernels $k$ and $l$ respectively, and $H_{i,j}=\delta_{i,j}-\frac{1}{n}$ is a centering matrix.
The main result of \citet{gretton2005measuring} is that the empirical estimate $\widehat{\text{HSIC}}$ converges to HSIC at a rate of $O\left(\frac{1}{n^{1/2}}\right)$, and its bias is of order $O(\frac{1}{n})$.

\section{Proposed Method}
\label{Approach}
Throughout this paper, we consider learning functions taking the form $Y=f^\star(X)+\varepsilon$,
where $X$ and $\varepsilon$ are independent random variables drawn from a distribution $\mathcal{D}$. This presentation assumes the existence of a mechanism tying together $X$ and $Y$ through $f^\star$, up to independent noise factors.  
A typical learning approach is to set some hypothesis class $\mathcal{H}$, and attempt to solve the following problem:
\begin{equation*}
    \min_{h \in \mathcal{H}} \mathbb{E}_{X,\varepsilon\sim \mathcal{D}}[\ell(y,h(x))],
\end{equation*}
where $\ell$ is often the squared loss function in a regression setting, or the cross entropy loss in case of classification.

Here, following \citet{mooij2009regression}, we suggest using a loss function which penalizes hypotheses whose residual from $Y$ is not independent of the instance $X$. Concretely, we pose the following learning problem:
\begin{equation} \label{learning_problem}
    \min_{h \in \mathcal{H}} HSIC(X,Y-h(X);\F,\G),
\end{equation}
where we approximate the learning problem with empirical samples using $\widehat{HSIC}$ as shown in Eq. \eqref{eq:emphsic}. Unlike typical loss functions, this loss does not decompose as a sum of losses over each individual sample.
In Algorithm \ref{alg1} we present a general gradient-based method for learning with this loss.

\begin{algorithm}[tb]
   \caption{Learning with HSIC-loss}
   \label{alg1}
\begin{algorithmic}
   \STATE {\bfseries Input:} samples $\{(x_i,y_i)\}_{i=1}^n$, kernels $k$, $l$, a hypothesis $h_\theta$ parameterized by $\theta$, and a batch size $m>1$.
   \REPEAT
   \STATE Sample mini-batch $\{(x_i,y_i)\}_{i=1}^m$
   \STATE Compute the residuals $r_i^\theta=y_i-h_\theta(x_i)$
   \STATE Compute the kernel matrices $K_{i,j}=k(x_i,x_j)$, and $R_{i,j}^\theta=l(r_i^\theta,r_j^\theta)$
   \STATE Compute the HSIC-loss on the mini-batch: $\text{Loss}(\theta)=\textbf{tr}(KHR^\theta H)/(m-1)^2$ where  $H_{i,j}=\delta_{i,j}-\frac{1}{m}$
   \STATE Update: $\theta \leftarrow \theta - \alpha\cdot\nabla \text{Loss}(\theta)$
   \UNTIL convergence
   \STATE Compute the estimated source bias: \\ $b \leftarrow \frac{1}{n}\sum_{i=1}^n y_i- \frac{1}{n}\sum_{i=1}^n h_\theta(x_i)$
   \STATE {\bfseries Output:} A bias-adjusted hypothesis $h(x)=h_\theta(x)+b$
\end{algorithmic}
\end{algorithm}

As long as the kernel functions $k(\cdot,\cdot)$ and $l(\cdot,\cdot)$ are differentiable, taking the gradient of the HSIC-loss is simple with any automatic differentiation software  \citep{paszke2019pytorch,tensorflow2015-whitepaper}.
We note that $\text{HSIC}(X,Y-h(X);\F,\G)$ is exactly the same for any two functions $h_1(X),h_2(X)$ who differ only by a constant. This can be seen by the examining the role of the centering matrix $H$, or from the invariance of the covariance operator under constant shifts. Therefore, the predictor obtained from solving \eqref{learning_problem} is determined only up to a constant term. To determine the correct bias, one can infer it from the empirical mean of the observed $Y$ values. We note that it is possible to add a regularization term to the loss function. In our experiments we used standard regularization techniques such as L2 norm weight and early stopping, setting them by standard (source distribution only) cross-validation.   

\subsection{Understanding the HSIC-loss}
Here we provide two additional views on the HSIC-loss, motivating its use in cases which go beyond additive noise.

The first is based on the observation that, up to a constant, the residual $Y-h(X)$ is the gradient of the squared error with respect to $h(X)$. Intuitively, this means that by optimizing for the residual to be independent of $X$, we ask that the direction and magnitude in which we need to update $h(X)$ to improve the loss is the same regardless of $X$. Put differently, the gradient of $h(X)$ would be the same for every subset of $X$. This is also true for classification tasks: consider the outputs of a classification network as logits $o$ which are then transformed by Sigmoid or Softmax operations into a probability vector $h$. The gradient of the standard cross-entropy loss with respect to $o$ is exactly the residual $Y-h(X)$. Thus, even when not assuming additive noise, requiring that the residual would be independent of $X$ encourages learning a model for which the gradients of the loss have no information about the instances $X$. 

The second interpretation concerns the question of what does it mean for a model $h(X)$ to be optimal with respect to predicting $Y$ from $X$. One reasonable way to define optimality is when $h(X)$ captures all the available information that $X$ has about the label $Y$. That is, a classifier is optimal when:
\begin{equation} \label{optimal_classifier}
    Y \indep X \mid h(X).
\end{equation}
This is also related to the condition implied by recent work on Invariant Risk Minimization \citep{arjovsky2019invariant}. Optimizing for the condition in equation \ref{optimal_classifier} is difficult because of the conditioning on $h(X)$. We show in the supplemental that attaining the objective encouraged by the HSIC-loss, namely learning a function $h(X)$ such that $Y-h(X)\indep X$, implies the optimality condition \ref{optimal_classifier}.

\section{Theoretical Results}
\label{Theory}
We now prove several properties of the HSIC-loss, motivating its use as a loss function which emphasizes robustness against distribution shifts.
We consider models of the form given in Eq. \eqref{eq:model}  such that $\varepsilon$ has zero mean. 
Assume that $X \in \mathcal{X}$ and $Y \in \mathcal{Y}$, where $\mathcal{X},\mathcal{Y}$ are compact metric spa
ces. Denote by $\F$ and $\G$ reproducing kernel Hilbert spaces of functions from $\mathcal{X}$ and $\mathcal{Y}$ respectively, to $\mathbb{R}$ s.t. that $\|f\|_\F\le M_\F$ for all $f\in \F$ and $\|g\|_\G\le M_\G$ for all $g\in\G$. We will use $M_\G$ and $M_\F$ throughout this section.
Omitted proofs can be found in the supplement.
Denote by $\Tilde{\F}$ and $\Tilde{\G}$ the restriction of $\F$ and $\G$ to functions in the unit ball of the respective RKHS. Before we state the results, we give the following useful lemma:

\begin{lemma}\label{lem}
Suppose $\F$ and $\G$ are RKHSs over $\X$ and $\Y$, s.t. $\|s\|_\F \le M_\F$ for all $s\in \F$ and $\|t\|_\G \le M_\G$ for all $t \in \G$. Then the following holds:
{\small
\begin{align*}
    \sup_{s\in\F,t\in\G}\cov[s(X),t(Y)] = M_\F \cdot M_\G\sup_{s\in\Tilde{\F},t\in\Tilde{\G}}\cov[s(X),t(Y)].
\end{align*}}
\end{lemma}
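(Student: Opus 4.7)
The plan is to exploit the bilinearity of the covariance functional together with the positive homogeneity of the RKHS norms, and to use the fact that both $\F$ (the $M_\F$-ball) and $\tilde\F$ (the unit ball) are closed under negation, so that the suprema on each side are non-negative. Concretely, the key identity is that for any nonzero $s\in\F$ and $t\in\G$, writing $\tilde s:=s/\|s\|_\F$ and $\tilde t:=t/\|t\|_\G$, bilinearity of covariance yields
\begin{equation*}
\cov[s(X),t(Y)] \;=\; \|s\|_\F\,\|t\|_\G\,\cov[\tilde s(X),\tilde t(Y)],
\end{equation*}
with $\tilde s\in\tilde\F$, $\tilde t\in\tilde\G$. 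The case where $s$ or $t$ is zero contributes $0$ and can be discarded.

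For the $(\le)$ direction, I would fix arbitrary $s\in\F$ and $t\in\G$ and apply the identity. Since $\tilde\F$ and $\tilde\G$ are closed under $u\mapsto -u$, the supremum $\sup_{\tilde\F,\tilde\G}\cov[\tilde s(X),\tilde t(Y)]$ is non-negative (one can always flip a sign), so it dominates $\cov[\tilde s(X),\tilde t(Y)]$. Combined with $\|s\|_\F\le M_\F$ and $\|t\|_\G\le M_\G$, this gives
\begin{equation*}
\cov[s(X),t(Y)] \;\le\; M_\F M_\G\sup_{\tilde\F,\tilde\G}\cov[\tilde s(X),\tilde t(Y)],
\end{equation*}
and taking sup over $s,t$ yields one inequality.

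For the $(\ge)$ direction, I would start from any $\tilde s\in\tilde\F$, $\tilde t\in\tilde\G$ and set $s:=M_\F\tilde s$, $t:=M_\G\tilde t$. Then $\|s\|_\F=M_\F\|\tilde s\|_\F\le M_\F$ and similarly $\|t\|_\G\le M_\G$, so $s\in\F$, $t\in\G$, and the identity gives $\cov[s(X),t(Y)] = M_\F M_\G\cov[\tilde s(X),\tilde t(Y)]$. Taking sup over $\tilde s,\tilde t$ on the right and recognizing that the left is bounded by $\sup_{\F,\G}\cov[s(X),t(Y)]$ completes the reverse inequality.

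There is no real obstacle here beyond the minor sign-handling observation that both feasible sets are symmetric around $0$; once that is noted, the lemma reduces to a one-line scaling argument using bilinearity of covariance.
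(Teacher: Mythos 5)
Your proof is correct and follows essentially the same route as the paper's: both reduce the claim to the positive homogeneity of the covariance bilinear form under rescaling between the $M_\F,M_\G$-balls and the unit balls. The only cosmetic difference is that you normalize by the actual norms $\|s\|_\F,\|t\|_\G$ (which forces your sign/symmetry observation), whereas the paper rescales by the constants $M_\F,M_\G$ directly along a maximizing sequence, sidestepping that issue.
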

\vspace{-5mm}

\subsection{Lower Bound}
We first relate HSIC-loss to standard notions of model performance: we show that under mild assumptions, the HSIC-loss is an upper bound to the variance of the residual $f^{\star}\left(X\right)-h\left(X\right)$. The additional assumption is that for all $h\in \mathcal{H}$, $f^\star -h$ is in the closure of $\F$, and that $\G$ contains the identity function from $\R$ to $\R$. This means that $M_\F$ acts as a measure of complexity of the true function $f^\star$ that we trying to learn. Note however this does not imply that $f^\star \in \mathcal{H}$, but rather this is an assumption on the kernel space used to calculate the HSIC term.

\begin{theorem} \label{lower bound}
Under the conditions specified above:
{\small
\begin{align*}
    \var(f^\star(X)-h(X)) \le M_\F \cdot M_\G \cdot \text{HSIC}(X,Y-h(X);\Tilde{\F},\Tilde{\G}).
\end{align*}}
\end{theorem}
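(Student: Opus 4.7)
The plan is to exploit Lemma \ref{lem} together with the independence of $\varepsilon$ to reduce $\var(f^\star(X)-h(X))$ to a single covariance that can be bounded by the supremum appearing in Eq. \eqref{eq:hsiccoco}. The hypotheses of the theorem provide the natural witnesses: $g := f^\star - h$ lies in the closure of $\F$ (hence acts as a test function in $\F$ of norm at most $M_\F$), and the identity $t(y) = y$ belongs to $\G$ with norm at most $M_\G$.

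First, I would rewrite the variance as a covariance and substitute $Y - h(X) = g(X) + \varepsilon$. By bilinearity,
\begin{align*}
\var(g(X)) = \cov[g(X), g(X)] = \cov[g(X), Y - h(X)] - \cov[g(X), \varepsilon].
\end{align*}
The last term vanishes because $\varepsilon \indep X$ and $\E[\varepsilon] = 0$, leaving $\var(g(X)) = \cov[g(X), Y - h(X)]$. Since $g \in \F$ and the identity lies in $\G$, this single covariance is at most $\sup_{s \in \F,\, t \in \G} \cov[s(X), t(Y-h(X))]$.

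Finally I would apply Lemma \ref{lem} to rewrite that supremum as $M_\F M_\G$ times the sup over the unit balls $\Tilde{\F}\times\Tilde{\G}$, and then invoke Eq. \eqref{eq:hsiccoco} to bound the latter by $\text{HSIC}(X, Y-h(X); \Tilde{\F}, \Tilde{\G})$. Chaining these pieces yields the claim. The step I expect to be the most delicate is the ``closure'' qualifier on $g$: if $g$ is only a norm-limit of $\F$-elements rather than an element itself, I would choose an approximating sequence $g_n \in \F$ with $\|g_n\|_\F \le M_\F$, run the argument for each $g_n$, and pass to the limit, using that variance and covariance are continuous under RKHS convergence when $\X$ is compact and the kernels are bounded. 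One minor implicit assumption to flag along the way is $\var(\varepsilon) < \infty$, so that covariances involving $Y - h(X)$ are well-defined.
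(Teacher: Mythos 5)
Your proposal is correct and follows essentially the same route as the paper's proof: both use the decomposition $Y-h(X)=\left(f^\star-h\right)(X)+\varepsilon$, take $f^\star-h$ (via an approximating sequence from $\F$) and the identity in $\G$ as witnesses for the covariance supremum, kill the $\cov$ term with $\varepsilon$ by independence, and chain Lemma \ref{lem} with Eq. \eqref{eq:hsiccoco}. The only difference is cosmetic: you argue from the variance up to HSIC, while the paper unrolls the chain of inequalities starting from HSIC.
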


Recalling the bias-variance decomposition:
\begin{align*}
    &\E\left[\left(Y-h\left(X\right)\right)^{2}\right]=\\
    &\var\left(f^{\star}\left(X\right)-h\left(X\right)\right)+\left(\E\left[f^{\star}\left(X\right)-h\left(X\right)\right]\right)^{2}+\E\left[\varepsilon^{2}\right],
\end{align*} 
we see that the HSIC-loss minimizes the variance part of the mean squared error (MSE). To minimize the entire MSE, the learned function should be adjusted by adding a constant which can be inferred from the empirical mean of $Y$.

\subsubsection{The Realizable Case}
If $h \in \mathcal{H}$ has HSIC-loss equal to zero, then up to a constant term, it is the correct function:

\begin{corollary} \label{right function}
Under the assumptions of Theorem \ref{lower bound}, we have the following: 
\begin{align*}
    \text{HSIC}\left(X,Y-h(X);\Tilde{\F},\Tilde{\G}\right)=0 \Rightarrow h\left(X\right)=f^{\star}\left(X\right)+c,
\end{align*} 
almost everywhere. 
\end{corollary}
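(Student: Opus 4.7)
The plan is to derive the corollary as a short direct consequence of Theorem \ref{lower bound} together with the elementary fact that a random variable with zero variance is almost surely constant.

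First, I would apply Theorem \ref{lower bound} to the hypothesis $h$. Its assumptions are inherited verbatim (that $f^\star - h$ lies in the closure of $\F$ and that $\G$ contains the identity on $\R$), so the inequality
\begin{equation*}
\var\bigl(f^\star(X) - h(X)\bigr) \le M_\F \cdot M_\G \cdot \text{HSIC}(X, Y - h(X); \tilde{\F}, \tilde{\G})
\end{equation*}
is available. Plugging in the hypothesis of the corollary, namely $\text{HSIC}(X, Y - h(X); \tilde{\F}, \tilde{\G}) = 0$, together with non-negativity of variance, forces $\var(f^\star(X) - h(X)) = 0$.

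Next, I would invoke the standard fact that a real-valued random variable with zero variance is almost surely equal to its expectation. Setting $c := \E[h(X) - f^\star(X)]$, this yields $f^\star(X) - h(X) = -c$ almost everywhere, i.e.\ $h(X) = f^\star(X) + c$ almost everywhere, which is the claimed conclusion.

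There is essentially no obstacle here: the theorem does all the heavy lifting and the corollary is a direct deduction. The only point worth being careful about is ensuring that the "almost everywhere" statement is interpreted with respect to the distribution of $X$ (as implicitly used in the variance), and that the constant $c$ is well defined, which it is since $h$ and $f^\star$ are bounded measurable functions on a compact domain so their expectations exist. No additional assumption beyond what Theorem \ref{lower bound} already requires needs to be introduced.
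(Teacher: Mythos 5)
Your proof is correct and follows essentially the same route as the paper: apply Theorem \ref{lower bound} with the hypothesis $\text{HSIC}=0$ to conclude $\var(f^\star(X)-h(X))=0$, and then use that a zero-variance random variable is almost surely constant. The extra care you take in identifying $c$ as $\E[h(X)-f^\star(X)]$ is a harmless elaboration of the paper's one-line argument.
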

\begin{proof}
From Theorem \ref{lower bound}, we have that 
\begin{align*}
    &\text{HSIC}\left(X,Y-h(X);\Tilde{\F},\Tilde{\G}\right)=0 \implies  \\ 
&\var\left(f^{\star}\left(X\right)-h\left(X\right)\right) =0 ,
\end{align*}
therefore $f^{\star}\left(X\right)-h\left(X\right)$ must be a constant up to a zero-probability set of $X$. 
\end{proof}

\subsection{Robustness Against Covariate Shift}
Due to its formulation as a supremum over a large set of functions, the HSIC-loss is an upper bound to a natural notion of robustness. This notion, which will be formalised below, captures the amount by which the performance of a model might change as a result of a covariate shift, where the performance is any that can be measured by some $\ell \in \G$ applied on the residuals $Y-h(X)$. In this subsection we denote the functions in $\G$ as $\ell$ instead of $t$, to emphasize that we now think of $\ell(r)$ as possible loss functions acting on the residuals. 

We consider two different ways of describing a target distribution which is different from the source.
The first is by specifying the density ratio between the target and source distributions. This is useful when the support of the distribution does not change but only the way it is distributed. A second type of covariate shift is due to restricting the support of the distribution to a certain subset. This can be described by an indicator function which states which parts of the source domain are included. The following shows how the HSIC-loss is an upper bound to the degradation in model performance in both covariate shift formulations.

We start with the latter case. For a subset $A\subset \X$ of positive measure, the quantity comparing a model's performance in terms of the loss $\ell$ on the source distribution and the same model's performance when the target distribution is restricted to $A$, is as follows:
\begin{equation*}
    \frac{1}{\mathbb{E}\left[1_A(x)\right]}\mathbb{E}\left[1_A(x)\ell\left(y-h\left(x\right)\right)\right]- \mathbb{E}\left[\ell\left(y-h\left(x\right)\right)\right]
\end{equation*}
For $\delta, c>0$ let $\mathcal{W}{c,\delta}$ denote the family of subsets $A$ with source probability at least $c>0$ s.t. there exists some $s\in \F$ which is $\delta$ close to $1_A$:
\begin{equation*}
    \mathcal{W}_{c,\delta}=\{ A \subset \X | \exists s \in \F \text{ s.t. } \|1_A-s\|_{\infty} \le \delta, \, \mathbb{E}\left[1_A(x)\right] \ge c \}.
\end{equation*}
All these subset can be approximately described by functions from $\F$. The complexity of such subsets is naturally controlled by $M_\F$. 
\begin{theorem}
    \label{subgroup}
    Let $\ell \in \G$ be a non-negative loss function, and let $\delta, c>0$:
    \begin{align*}
        &\sup_{A\in \mathcal{W}{c,\delta}} \frac{1}{\mathbb{E}\left[1_A(x)\right]}\mathbb{E}\left[1_A(x)\ell\left(y-h\left(x\right)\right)\right] \le \\  &\frac{M_\F M_\G HSIC(X,Y-h(X);\Tilde{\F},\Tilde{\G})}{c}  \\&+ \left(\frac{2\delta}{c}+1\right)\mathbb{E}\left[\ell\left(y-h\left(x\right)\right)\right].
    \end{align*}
\end{theorem}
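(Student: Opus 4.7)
The plan is to bound the restricted expectation by an $\F$-approximation of the indicator, then split into a covariance term controlled by HSIC and a mean-product term controlled by the source loss. Fix $A\in \mathcal{W}_{c,\delta}$ with an approximator $s_A\in\F$ satisfying $\|1_A - s_A\|_\infty \le \delta$. Writing $1_A = s_A + (1_A - s_A)$ and using $\ell\ge 0$ together with $\|1_A-s_A\|_\infty\le\delta$ yields
\begin{equation*}
\mathbb{E}[1_A(x)\ell(y-h(x))] \le \mathbb{E}[s_A(x)\ell(y-h(x))] + \delta\,\mathbb{E}[\ell(y-h(x))].
\end{equation*}

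Next, I would decompose the first term on the right as a covariance plus a product of means,
\begin{equation*}
\mathbb{E}[s_A(x)\ell(y-h(x))] = \cov(s_A(x),\ell(y-h(x))) + \mathbb{E}[s_A(x)]\,\mathbb{E}[\ell(y-h(x))],
\end{equation*}
and control the covariance by going to the sup over $\F\times\G$. Since $s_A\in\F$ and $\ell\in\G$, the covariance is at most $\sup_{s\in\F,\,t\in\G}\cov[s(x),t(y-h(x))]$, which by Lemma~\ref{lem} equals $M_\F M_\G \sup_{s\in\tilde\F, t\in\tilde\G}\cov[s(x),t(y-h(x))]$, and by the HSIC--COCO inequality \eqref{eq:hsiccoco} this is in turn bounded by $M_\F M_\G\,\text{HSIC}(X,Y-h(X);\tilde\F,\tilde\G)$. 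For the product-of-means piece I would use $\mathbb{E}[s_A(x)] \le \mathbb{E}[1_A(x)] + \delta$, which again follows from the uniform approximation.

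Combining these pieces gives
\begin{equation*}
\mathbb{E}[1_A(x)\ell(y-h(x))] \le M_\F M_\G\,\text{HSIC}(X,Y-h(X);\tilde\F,\tilde\G) + (\mathbb{E}[1_A(x)] + 2\delta)\,\mathbb{E}[\ell(y-h(x))],
\end{equation*}
after which dividing through by $\mathbb{E}[1_A(x)]\ge c$ produces the claimed bound, with the factor $(1+2\delta/c)$ on the source-loss term and $M_\F M_\G\,\text{HSIC}/c$ from the independence term. Since every step is uniform in the choice of $A\in \mathcal{W}_{c,\delta}$ and $s_A$, taking the supremum over $A$ on the left preserves the inequality. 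I expect the main obstacle to be bookkeeping: making sure the approximation error is applied twice (once to pass from $1_A$ to $s_A$ inside the expectation, and once when bounding $\mathbb{E}[s_A]$), so that the $\delta$ terms combine correctly into the coefficient $2\delta/c$, and verifying that the covariance step is legitimately applied to the unnormalized $s_A$ rather than to a rescaled element of $\tilde\F$, which is exactly what Lemma~\ref{lem} is designed to handle.
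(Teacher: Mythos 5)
Your proposal is correct and follows essentially the same route as the paper's proof: approximate $1_A$ by $s_A$ at a cost of $\delta\,\mathbb{E}[\ell]$, split $\mathbb{E}[s_A\ell]$ into covariance plus product of means, bound the covariance by $M_\F M_\G\,\text{HSIC}$ via Lemma~\ref{lem} and inequality~\eqref{eq:hsiccoco}, pay a second $\delta\,\mathbb{E}[\ell]$ to return from $\mathbb{E}[s_A]$ to $\mathbb{E}[1_A]$, and divide by $\mathbb{E}[1_A]\ge c$. The bookkeeping of the two $\delta$ terms and the normalization step match the paper exactly.
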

Theorem \ref{subgroup} states that the degradation in performance due to restricting the support of the distribution to some subset is bounded by terms related to the size of the set and the ability to represent it by $\F$. Compare this to the following naive bound: 
\begin{equation*}
    \sup_{A\in \mathcal{W}{c,\delta}} \frac{\mathbb{E}\left[1_A(x)\ell\left(y-h\left(x\right)\right)\right]}{\mathbb{E}\left[1_A(x)\right]} \le \frac{\mathbb{E}\left[\ell\left(y-h\left(x\right)\right)\right]}{c}.
\end{equation*}
Failing to account how the loss is distributed across different subsets of $\X$, as done in the HSIC-loss, leads to poor generalization guarantees. Indeed, the naive bound will not be tight for the original function, i.e. $h=f^\star$, but the HSIC based bound will be tight whenever $\delta \ll c$. 

Returning to the first way of describing covariate shifts, we denote by $P_{\text{source}}(X)$ the density function of the distribution on $\mathcal{X}$ from which the training samples are drawn, and $P_{\text{target}}(X)$ the density of an unknown target distribution over $\mathcal{X}$. 

\begin{theorem} \label{robustness}
Let $\mathcal{Q}$ denote the set of density functions on $\mathcal{X}$ which are absolutely continuous w.r.t. $P_{\text{source}}(X)$, and their density ratio is in $\F$:
\begin{align*}
   \mathcal{Q} =  
   \bigg\{& P_\text{target}:\mathcal{X} \rightarrow \mathbb{R}_{\geq 0} \quad \text{s.t.  }  \E_{x\sim P_\text{target}}\left[1\right]=1,\\
   & \E_{x\sim P_\text{source}}\left[\frac{P_\text{target}(x)}{P_\text{source}(x)}\right]=1, \frac{P_\text{target}}{P_\text{source}} \in \F \bigg\}.  
\end{align*}
Then,
    {\small
    \begin{align*}
        &\sup_{\substack{P_\text{target} \in \mathcal{Q}\\\ell\in\G}} \E_{x\sim P_\text{target}} [\ell(Y-h(X))] - \E_{x\sim P_\text{source}} [\ell(Y-h(X))] \\
        &\le  M_\F \cdot M_\G \cdot \text{HSIC}(X,Y-h(X);\Tilde{\F},\Tilde{\G}),
    \end{align*}}
\end{theorem}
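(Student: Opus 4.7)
The plan is to convert the target-distribution expectation into a source-distribution expectation via importance sampling, recognize the resulting gap as a covariance, and then invoke Lemma~\ref{lem} together with the HSIC/COCO inequality~\eqref{eq:hsiccoco}.

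First, since we are in the covariate shift setting $P_\text{target}(Y\mid X) = P_\text{source}(Y\mid X)$, for any $P_\text{target}\in\mathcal{Q}$ with density ratio $s(x) := P_\text{target}(x)/P_\text{source}(x)\in\F$, a standard change-of-measure argument yields
\begin{align*}
\E_{x\sim P_\text{target}}[\ell(Y-h(X))] = \E_{x\sim P_\text{source}}\bigl[s(X)\,\ell(Y-h(X))\bigr].
\end{align*}
Because $\E_{P_\text{source}}[s(X)]=1$ by the definition of $\mathcal{Q}$, I can rewrite the target–source gap as
\begin{align*}
&\E_{P_\text{target}}[\ell(Y-h(X))] - \E_{P_\text{source}}[\ell(Y-h(X))] \\
&= \E_{P_\text{source}}[s(X)\ell(Y-h(X))] - \E_{P_\text{source}}[s(X)]\cdot\E_{P_\text{source}}[\ell(Y-h(X))] \\
&= \cov\bigl(s(X),\,\ell(Y-h(X))\bigr).
\end{align*}
This rewriting as a covariance is the key step, because covariances between functions of $X$ and functions of $Y-h(X)$ are exactly the objects that HSIC controls.

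Next I would observe that the sup in the statement is taken over a subset of $\F\times\G$: every $P_\text{target}\in\mathcal{Q}$ produces some $s\in\F$ (automatically with $\|s\|_\F\le M_\F$ by hypothesis), and every admissible loss lies in $\G$. Therefore
\begin{align*}
\sup_{P_\text{target}\in\mathcal{Q},\,\ell\in\G}\cov\bigl(s(X),\ell(Y-h(X))\bigr) \le \sup_{s\in\F,\,\ell\in\G}\cov\bigl(s(X),\ell(Y-h(X))\bigr).
\end{align*}
Applying Lemma~\ref{lem} rescales this right-hand supremum to the unit balls, producing the factor $M_\F\cdot M_\G$ out front; then applying the HSIC/COCO inequality~\eqref{eq:hsiccoco} bounds the remaining unit-ball supremum by $\text{HSIC}(X,Y-h(X);\tilde\F,\tilde\G)$, closing the chain of inequalities.

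The only subtle point—and the place where I would be careful—is justifying that the nonnegativity and normalization constraints on $s$ (imposed by the fact that it must be a density ratio) can be dropped when passing to the unrestricted supremum over $\F$. This is automatic since we are upper-bounding, but it is worth noting that it is precisely this relaxation that allows HSIC, which is symmetric in its roles and treats $\F$ as a linear space rather than a cone, to serve as a bound. Everything else is a routine combination of importance sampling, the covariance identity $\cov(U,V)=\E[UV]-\E[U]\E[V]$ applied with $\E[U]=1$, and previously stated RKHS facts.
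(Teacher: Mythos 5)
Your proof is correct and follows essentially the same route as the paper's: the paper runs the identical chain in the opposite direction, starting from $\text{HSIC}\ge\sup_{s\in\Tilde{\F},\ell\in\Tilde{\G}}\cov$, rescaling via Lemma~\ref{lem}, restricting the supremum from $\F$ to the set of density ratios, and finishing with the same change-of-measure and $\E[s(X)]=1$ identity you use. The only cosmetic difference is the direction of the argument; the key steps (importance sampling, the covariance identity, relaxation to the full ball, and inequality~\eqref{eq:hsiccoco}) coincide exactly.
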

where HSIC is of course evaluated on the training distribution $P_\text{source}$.

Combining Theorem \ref{robustness} and the lower bound of Theorem \ref{lower bound}, we obtain the following result:
\begin{corollary}

\label{thm:together}
    Under the same assumptions of Theorems \ref{lower bound} and \ref{robustness}, further assume that the square function $x\mapsto x^2$, belongs to $\G$ or its closure. 
    Denote: $\delta_{\text{HSIC}}(h) = \text{HSIC}(X,Y-h(X);\Tilde{\F},\Tilde{\G})  $, $\text{MSE}_{P_\text{target}}(h) = \E_{P_\text{target}} [(Y-h(X))^2]$, \, $\text{bias}_\text{source}(h) = \E_{ P_\text{source}} [f^\star(x)-h(x)]$, and $\sigma^2 = \E[\varepsilon^2]$.
    Then:
    \begin{align*}
           &\sup_{P_\text{target} \in \mathcal{Q}} \text{MSE}_{{P}_\text{target}}(h) \\
           \le &2M_\F\cdot M_\G \cdot\delta_{\text{HSIC}}(h) +  \text{bias}_\text{source}(h)^2 + \sigma^2.
    \end{align*}
\end{corollary}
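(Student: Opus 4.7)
The plan is to combine Theorem \ref{robustness} (specialized to the squared-loss as the choice of $\ell \in \G$) with the bias-variance decomposition of the source MSE and the variance bound from Theorem \ref{lower bound}. The whole argument should be essentially a direct chaining of the two theorems, so I expect no deep obstacle -- just careful bookkeeping of the constants.

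First, since $x \mapsto x^2$ is assumed to lie in $\G$ (or its closure), I would plug $\ell(r) = r^2$ into Theorem \ref{robustness}. Because the supremum in that theorem is taken jointly over $P_\text{target}\in\mathcal{Q}$ and $\ell\in\G$, restricting the inner sup to this single $\ell$ yields
\begin{equation*}
    \sup_{P_\text{target}\in\mathcal{Q}} \text{MSE}_{P_\text{target}}(h) \;\le\; \text{MSE}_{P_\text{source}}(h) + M_\F M_\G \,\delta_{\text{HSIC}}(h).
\end{equation*}

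Next, I would decompose the source MSE. Writing $Y - h(X) = (f^\star(X)-h(X)) + \varepsilon$ and using $\varepsilon \indep X$ with $\E[\varepsilon]=0$, the standard bias-variance identity gives
\begin{equation*}
    \text{MSE}_{P_\text{source}}(h) = \var\bigl(f^\star(X)-h(X)\bigr) + \text{bias}_\text{source}(h)^2 + \sigma^2.
\end{equation*}
Theorem \ref{lower bound} then bounds the first term by $M_\F M_\G \,\delta_{\text{HSIC}}(h)$. Adding this to the previous display produces the stated factor of $2$ on the HSIC term, and the corollary follows.

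The only subtle point will be handling the ``or its closure'' clause for the square function: Theorem \ref{robustness} is stated for $\ell \in \G$ exactly, so if $x\mapsto x^2$ is only a limit of functions $\ell_n \in \G$, I would apply the theorem to each $\ell_n$ and pass to the limit. Since $\X,\Y$ are compact and $h$ is bounded, the residual $Y - h(X)$ ranges over a compact set, and RKHS-convergence of $\ell_n \to x^2$ entails uniform convergence on that set (by reproducing-property plus bounded kernel), which lets both sides of the inequality pass to the limit under expectation. Apart from this limiting step, the proof is just the assembly described above.
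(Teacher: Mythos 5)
Your proposal is correct and follows essentially the same route as the paper: apply Theorem \ref{robustness} with the squared loss to bound $\sup_{P_\text{target}}\text{MSE}_{P_\text{target}}(h)$ by $\text{MSE}_{P_\text{source}}(h)+M_\F M_\G\,\delta_{\text{HSIC}}(h)$, expand the source MSE via the bias--variance decomposition, and absorb the variance term using Theorem \ref{lower bound} to obtain the factor of $2$. Your extra remark on passing to the limit for the ``closure'' clause is a reasonable (and slightly more careful) treatment of a point the paper glosses over.
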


Theorem \ref{robustness} and Corollary \ref{thm:together} show that minimizing HSIC minimizes an upper bound on the worst case loss relative to a class of target distribution whose complexity is determined by the norm of the RKHS $M_\F$. Compared to a naive bound based on the infinity norm of the density ratio, this bound is much tighter when considering $f^\star$ for example, and by continuity for functions near it. Further discussion can be found in the supplementary material.

\subsection{Learnability: Uniform Convergence}
By formulating the HSIC learning problem as a learning problem over pairs of samples from $\mathcal{X} \times \mathcal{X}$ with specially constructed labels, we can reduce the question of HSIC learnability to a standard learning theory problem (\citet{mohri2018foundations}, Ch 3). We use this reduction to prove that it is possible to minimize the HSIC objective on hypothesis classes $\mathcal{H}$ with bounded Rademacher complexity $\mathcal{R}_n(\mathcal{H})$ using a finite sample.
\begin{theorem} \label{leanability_thm}
    Suppose the residuals' kernel $k$ is bounded in $[0,1]$ and satisfies the following condition: $k(r,r^\prime)=\iota(h(x)-h(x^\prime),y-y^\prime)$ where $\iota:\,\mathbb{R}\times\mathcal{Y}\to\mathbb{R}$ is s.t. $\iota(\cdot, y)$ is an $L_\iota-$Lipschitz function for all $y$. Let $C_1=\sup_{x,x^\prime}l(x,x^\prime)$, $C_2=\sup_{r,r^\prime}k(r,r^\prime)$. Then, with probability of at least $1-\delta$, the following holds for all $h\in\mathcal{H}$:
    {\small\begin{align*}
        &\left|\text{HSIC}\left(X,Y-h(X);\F,\G\right) - \widehat{\text{HSIC}}\left(\{(x_i,r_i)\}_{i=1}^n;\F,\G)\right)\right| \\
        \le &3C_1\left( 4L_\iota\mathcal{R}_n(\mathcal{H})+O\left(\sqrt{\frac{\ln(1/\delta)}{n}}\right)\right) + 3C_2C_1\sqrt{\frac{\ln(2/\delta)}{2n}}.
    \end{align*}}
\end{theorem}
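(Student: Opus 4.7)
The plan is to decompose both $\text{HSIC}$ and $\widehat{\text{HSIC}}$ via the standard $V$-statistic expansion of $\|C_{xy}\|_{\text{HS}}^2$:
\begin{align*}
\text{HSIC}(X,R) = \E[l(X,X')k(R,R')] + \E[l(X,X')]\,\E[k(R,R')] - 2\,\E_{X,R}\!\left[\E_{X'}[l(X,X')]\,\E_{R'}[k(R,R')]\right],
\end{align*}
where $R=Y-h(X)$, and the empirical $\widehat{\text{HSIC}}$ admits an analogous decomposition as a combination of sample $U$-statistics. By the triangle inequality, $\sup_{h\in\mathcal{H}}\,|\widehat{\text{HSIC}}-\text{HSIC}|$ is bounded by the sum of the three term-wise uniform deviations, which accounts for the leading factor of $3$ in the statement.

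For each of the three summands I would carry out two steps. Step (i) is a bounded-differences (McDiarmid) concentration, yielding $O\!\left(C_1 C_2 \sqrt{\ln(1/\delta)/n}\right)$ deviation from the mean for a fixed $h$; this accounts for the $\sqrt{\ln(1/\delta)/n}$ pieces in the statement. Step (ii) is a symmetrization argument that replaces $\sup_h |\cdot-\E|$ by the Rademacher complexity of the induced class of functions on the doubled sample space $(\mathcal{X}\times\mathcal{Y})^2$. Because $l(x,x')$ is independent of $h$ and bounded by $C_1$, the factor $l(x,x')$ pulls out through the contraction step, and the remaining task is to bound the Rademacher complexity of $\{(x,y,x',y')\mapsto k(y-h(x),\,y'-h(x')) : h\in\mathcal{H}\}$.

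The crux is this last Rademacher complexity. Using the structural assumption $k(r,r')=\iota(h(x)-h(x'),y-y')$ with $\iota(\cdot,y)$ being $L_\iota$-Lipschitz uniformly in $y$, Talagrand's contraction lemma bounds the Rademacher complexity of the composed class by $L_\iota$ times that of $\{(x,x')\mapsto h(x)-h(x') : h\in\mathcal{H}\}$. Subadditivity and the triangle inequality give a further bound of $2\mathcal{R}_n(\mathcal{H})$, producing the $4L_\iota\mathcal{R}_n(\mathcal{H})$ term (with the extra factor $2$ arising from the symmetrization step). The $C_2$-dependent piece in the final bound comes from a direct Rademacher/McDiarmid bound on the pure-$l$ summand, which depends on $h$ only through the $k$-factor.

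The main obstacle is that $\widehat{\text{HSIC}}$ is a $U$-statistic of order two (with the middle summand a product of $U$-statistics and the mixed summand an order-three $U$-statistic), while the cleanest symmetrization and McDiarmid bounds apply to i.i.d.\ sums. My plan is to circumvent this via Hoeffding's reduction: write each $U$-statistic as an average over permutations of sums over $\lfloor n/2\rfloor$ (resp.\ $\lfloor n/3\rfloor$) disjoint tuples, so that for any fixed tuple-partition it becomes a genuine i.i.d.\ sum over the doubled (resp.\ tripled) sample space. Jensen's inequality pushes the supremum through the averaging over permutations, so concentration and symmetrization apply to one fixed partition at the cost only of small constants, which are absorbed into the leading $3$ and $4$ of the stated bound.
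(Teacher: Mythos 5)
Your proposal follows essentially the same route as the paper: the same three-term decomposition of HSIC and its empirical estimate (yielding the leading factor of $3$), the same pulling-out of the $h$-independent kernel $l$ via $C_1$ with a Hoeffding bound for the pure-$l$ deviation, and the same reduction of the residual term to a Lipschitz loss over the pair class $\mathcal{H}^2=\{(x,x')\mapsto h(x)-h(x')\}$, with the contraction lemma and subadditivity of Rademacher complexity producing the $4L_\iota\mathcal{R}_n(\mathcal{H})$ term. The one point where you go beyond the paper is in explicitly handling the fact that the order-$2$, $3$ and $4$ $U$-statistics are not i.i.d.\ sums, via Hoeffding's decomposition into averages over disjoint tuples; the paper's own proof silently treats the pairs as $n$ i.i.d.\ samples from the product distribution, so your extra step actually patches a gap the paper leaves implicit (at the mild cost that the Rademacher complexity is then evaluated at $\lfloor n/2\rfloor$ rather than $n$ samples, which only affects constants).
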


\section{Related Work}
\label{Related}
As mentioned above, \citet{mooij2009regression} were the first to propose using the HSIC loss as a means to learn a regression model. However, their work focused exclusively on learning the functions corresponding to edges in a causal graph, and leveraging that to learn causal directions and then the structure of the graph itself. They have not applied the method to domain adaptation or to robust learning, nor did they analyze the qualities of this objective as a loss function.

The literature on robust learning is rapidly growing in size and we cannot hope to cover it all here. Especially relevant papers on robust learning for \emph{unsupervised} domain adaptation are \citet{namkoong2017variance,volpi2018generalizing,duchi2018learning}. \citet{volpi2018generalizing} propose an iterative process whereby the training set is augmented with adversarial examples that are close in some feature space, to obtain a perturbation of the distribution. \citet{namkoong2017variance} suggest minimizing the variance of the loss in addition to its empirical mean, and employ techniques from learning distributionally robust models.
Some recent papers have highlighted strong connections between causal inference and robust learning, see e.g. the works of \citet{heinze2017conditional} and \citet{rothenhausler2018anchor}.  By having some knowledge on the corresponding data generating graph of the problem, \citet{heinze2017conditional} propose minimizing the variance under properties that are presupposed to have no impact on the prediction. A more general means of using the causal graph to learn robust models is given by \citet{subbaswamy2018counterfactual,subbaswamy2019preventing}, who propose a novel \emph{graph surgery estimator} which specifically takes account of factors in the data which are known apriori to be vulnerable to changes in the distribution. These methods require detailed knowledge of the causal graph and are computationally heavy when the dimension of the problem grows. In \cite{rothenhausler2018anchor}, the authors propose using anchors, which are covariates that are known to be exogenous to the prediction problem, to obtain robustness against distribution shifts induced by the anchors. 
Of course, a large body of work exist on covariate shift learning when there is access to unlabeled test data (see, e.g., \citet{daume2006domain,  saenko2010adapting,gretton2009covariate,tzeng2017adversarial,volpi2018adversarial}), however we stress that we do not require such access.

\section{Experimental Results}
\label{Experiments}
To evaluate the performance of the HSIC loss function, we experiment with synthetic and real-world data. We focus on tasks of unsupervised transfer learning: we train on a one distribution, called the \textsc{source} distribution, and test on a different distribution, called the \textsc{target} distribution. We assume we have no samples from the target distribution during learning. 
\subsection{Synthetic Data}
\begin{figure*}[!t]
    {\includegraphics[scale=0.4]{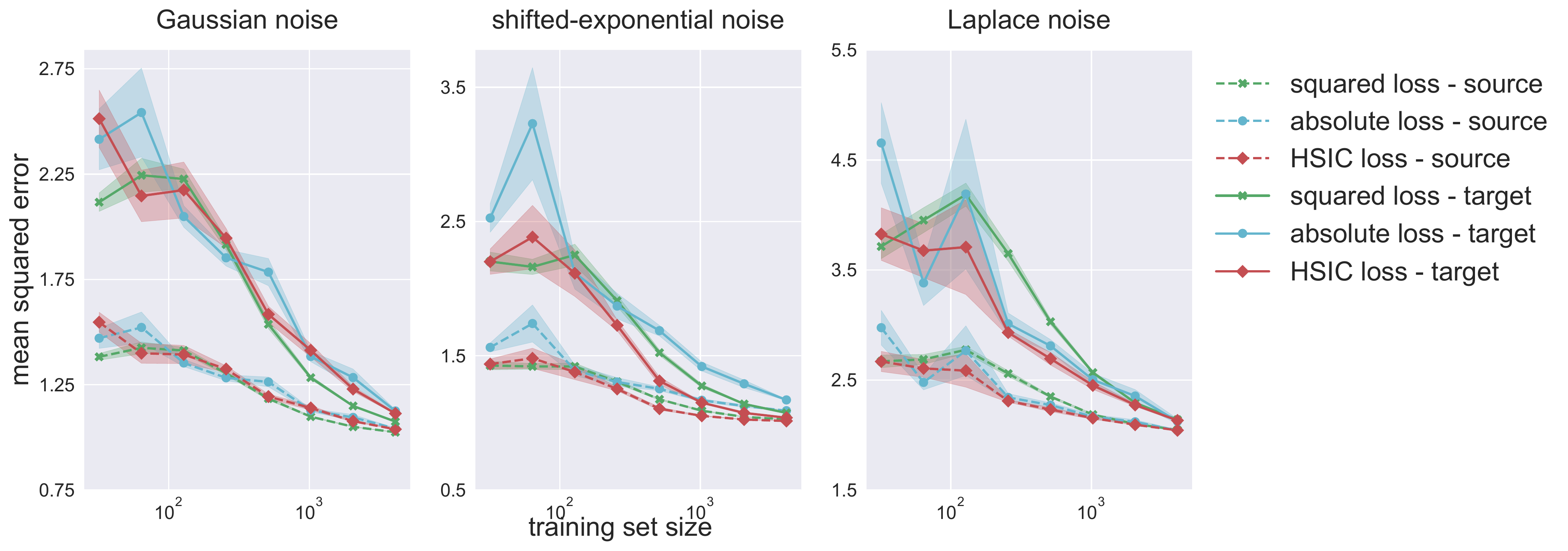}}
  \caption{Comparison of models trained with squared-loss, absolute-loss and HSIC-loss. Each point on the graph is the MSE averaged over 20 experiments, and the shaded area represents one standard error from the mean. Dashed lines are the MSE evaluated over the source distribution, solid lines are the MSE evaluated over the target distribution.}
  \label{linear_regression_results}
\end{figure*}
As a first evaluation of the HSIC-loss, we experiment with fitting a linear model. We focus on small sample sizes as those often lead to difficulties in covariate shift scenarios. The underlying model in the experiments is $y=\beta^\top x + \varepsilon$ where $\beta\in\mathbb{R}^{100}$ is drawn for each experiment from a Gaussian distribution with $\sigma=0.1$. In the training phase, $x$ is drawn from a uniform distribution over $[-1,1]^{100}$. We experimented with $\varepsilon$ drawn from one of three distributions: Gaussian, Laplacian, or a shifted exponential: $\varepsilon=1-e$ where $e$ is drawn from an exponential distribution $\exp(1)$. In any case, $\varepsilon$ is drawn independently from $x$.
In each experiment, we draw $n\in\{2^i\}_{i=5}^{13}$ training samples and train using either using squared-loss, absolute-loss, and HSIC-loss, all with an $l_2$ regularization term. 
% In order to find the $l_2$ regularization parameter, we perform cross validation on validation data created from $10\%$ of the training set, where the possible values are in $\{15, 12, 10, 5, 1, 0.1, 0.01, 0.001, 0.0001, 0.00001, 0.000001,0\}$, for absolute- and HSIC-losses, and $\{35,37,...,69\}$ for the squared-loss. 
% The model is then trained on all of the training data, and is evaluated on two test sets each of size $100\cdot n$. 
The \textsc{source} test set is created in the same manner as the training set was created, while the \textsc{target} test set simulates a covariate shift scenario. This is done by changing the marginal distribution of $x$ from a uniform distribution to a Gaussian distribution over $\mathbb{R}^{100}$. In all cases the noise on the \textsc{source} and \textsc{target} is drawn from the same distribution. This process is repeated $20$ times for each $n$. When training the models with HSIC-loss, we used batch-size of 32, and optimized using Adam optimizer \citep{kingma2014adam}.
% , with learning rate drawn from a uniform distribution over $[0.0001,0.0002]$. 
The kernels we chose were radial basis function kernels, with $\gamma=1$ for both covariates' and residuals' kernels. 
% When training with absolute-loss, we used stochastic gradient descent, with initial learning rate determined by cross validation from $\{0.05, 0.01, 0.005, 0.001, 0.0005, 0.0001,0.00005,0.00001\}$, and later decayed using inverse scaling.

Figure \ref{linear_regression_results} presents the results of experiments with Gaussian, Laplacian, and shifted-exponential noise. With Gaussian noise, HSIC-loss performs similarly to squared-loss regression, and with Laplacian noise HSIC-loss performs similarly to absolute-loss regression, where squared-loss is the maximum-likelihood objective for Gaussian noise and absolute-loss is the maximum-likelihood objective for Laplacian noise.
In both cases it is reassuring to see that HSIC-loss is on par with the maximum-likelihood objectives. In all cases we see that HSIC-loss is better on the \textsc{target} distribution compared to objectives which are not the maximum likelihood objective. This is true especially in small sample sizes. We believe this reinforces our result in Theorem \ref{robustness} that the HSIC-loss is useful when we do not know in advance the loss or the exact target distribution. %Moreover, we can see that the expected loss in the \textsc{target} distribution is closer to the expected loss in the \textsc{source} distribution when training with HSIC-loss, as the theory suggests. 
\subsection{Bike Sharing Dataset}
  In the bike sharing dataset by \citet{fanaee2014event} from the UCI repository, the task is to predict the number of hourly bike rentals based on the following covariates: temperature, feeling temperature, wind speed and humidity. Consisting of 17,379 samples, the data was collected over two years, and can be partitioned by year and season. This dataset has been used to examine domain adaptation tasks by \citet{subbaswamy2019preventing} and \citet{rothenhausler2018anchor}. We adopt their setup, where the \textsc{source} distribution used for training is three seasons of a year, and the \textsc{target} distribution used for testing is the forth season of the same year, and where the model of choice is linear. 
  We compare with least squares, anchor regression (AR) \citet{rothenhausler2018anchor} and Surgery by \citet{subbaswamy2019preventing}.
  
  We ran 100 experiments, each of them was done by randomly sub-sampling $80\%$ of the \textsc{source} set and $80\%$ of the \textsc{target} set, thus obtaining a standard error estimate of the mean. When training the models with HSIC-loss, we used batch-size of 32, and optimized the loss with Adam  \citep{kingma2014adam}, with learning rate drawn from a uniform distribution over $[0.0008,0.001]$. The kernels we chose were radial basis function kernels, with $\gamma=2$ for the covariates' kernel, and $\gamma=1$ for the residuals' kernel.
  
  We present the results in Table \ref{bike_sharing_variance}. Following the discussion in section \ref{Theory}, we report the \textit{variance} of the residuals in the test set. We can see that training with HSIC-loss results in better performances in 6 out of 8 times. In addition, unlike AR and Surgery, training with HSIC-loss does not require knowledge of the specific causal graph of the problem, nor does it require the training to be gathered from different sources as in AR.
\begin{table}[!hbt]
\centering
\caption{Variance results on the bike sharing dataset. Each row corresponds to a training set consisting of three season of that year, and the variance of $Y-h(X)$ on the \textsc{target} set consisting of the forth season is reported. In bold are the best results in each experiment, taking into account one standard error.}
\vspace{10pt}

{\small
\begin{tabular}{l@{\hskip 1mm}l@{\hskip 1mm}l@{\hskip 1mm}l@{\hskip 1mm}l@{\hskip 1mm}l} 
\hline
Test data     & OLS       & AR                      & Surgery   & HSIC                \\ 
\hline
(Y1) Season 1 & \textbf{15.4}$\pm$0.02 & \textbf{15.4}$\pm$0.02 & 15.5$\pm$0.03 & 16.0$\pm$0.04  \\
Season 2      &  23.1$\pm$0.03 & 23.1$\pm$0.03 & 23.7$\pm$0.04 & \textbf{22.9}$\pm$0.03  \\
Season 3     & 28.0$\pm$0.03 & 28.0$\pm$0.03 &  28.1$\pm$0.03 & \textbf{27.9}$\pm$0.03  \\
Season 4      & 23.7$\pm$0.03 & 23.7$\pm$0.03 & 25.6$\pm$0.04 & \textbf{23.6}$\pm$0.04  \\ 
\hline
(Y2) Season 1 & \textbf{29.8}$\pm$0.05 & \textbf{29.8}$\pm$0.05 & 30.7$\pm$0.06 & 30.7$\pm$0.07  \\
Season 2      & 39.0$\pm$0.05 & 39.1$\pm$0.05 & 39.2$\pm$0.06 & \textbf{38.9}$\pm$0.04          \\
Season 3      & 41.7$\pm$0.05 & 41.5$\pm$0.05 & 41.8$\pm$0.05 & \textbf{40.8}$\pm$0.05  \\
Season 4      & 38.7$\pm$0.04 & \textbf{38.6}$\pm$0.04 & 40.3$\pm$0.06 & \textbf{38.6}$\pm$0.05  \\
\hline
\end{tabular}}
\label{bike_sharing_variance}
\end{table}

\subsection{Rotating MNIST}
In this experiment we test the performance of models trained on the MNIST dataset by \citet{lecun1998gradient} as the \textsc{source} distribution, and digits which are rotated by an angle $\theta$ sampled from a uniform distribution over $[-45,45]$ as the \textsc{target} distribution. Samples of the test data are depicted in the supplementary material. The standard approach to obtain robustness against such perturbations is to augment the training data with images with similar transformations, as in \citet{796} for example. However, in practice it is not always possible to know in advance what kind of perturbations should be expected, and therefore it is valuable to develop methods for learning robust models even without such augmentations. We compared training with HSIC-loss to training with cross entropy loss, using three types of architectures. The first is a convolutional neural network (CNN):$\to$ \textit{input $\to$ conv(dim=32)$\to$ conv(dim=64)  $\to$
fully-connected(dim=524)$\to$ dropout(p=0.5) $\to$ fully-connected(dim=10)}. The second is a  multi-layered perceptron (MLP) with two hidden layers of size $256,524,1024$:
\textit{input $\to$ fully-connected(dim={256,524,1024}) $\to$ fully-connected(dim={256,524,1024})$\to$dropout(p=0.5) $\to$ fully-connected(dim=10)}. The third architecture was also an MLP, except there were four hidden layers instead of two.
Each experiment was repeated 20 times, and in every experiment the number of training steps (7 epochs) remained constant for a fair comparison. Each time the training set consisted of 10K randomly chosen samples. 
The kernels we chose were radial basis function kernels with $\gamma=1$ for the residuals, and $\gamma=22$ for the images, chosen according to the heuristics suggested by \citet{mooij2009regression}.
The results are depicted in Figure \ref{rotated_mnist_results}. We see that for all models, moving to the \textsc{target} distribution induces a large drop in accuracy. Yet for all architectures we see that using HSIC-loss gives better performance on the \textsc{target} set compared to using the standard cross-entropy loss.
\begin{figure}[]
    \centering
    \includegraphics[scale=0.28]{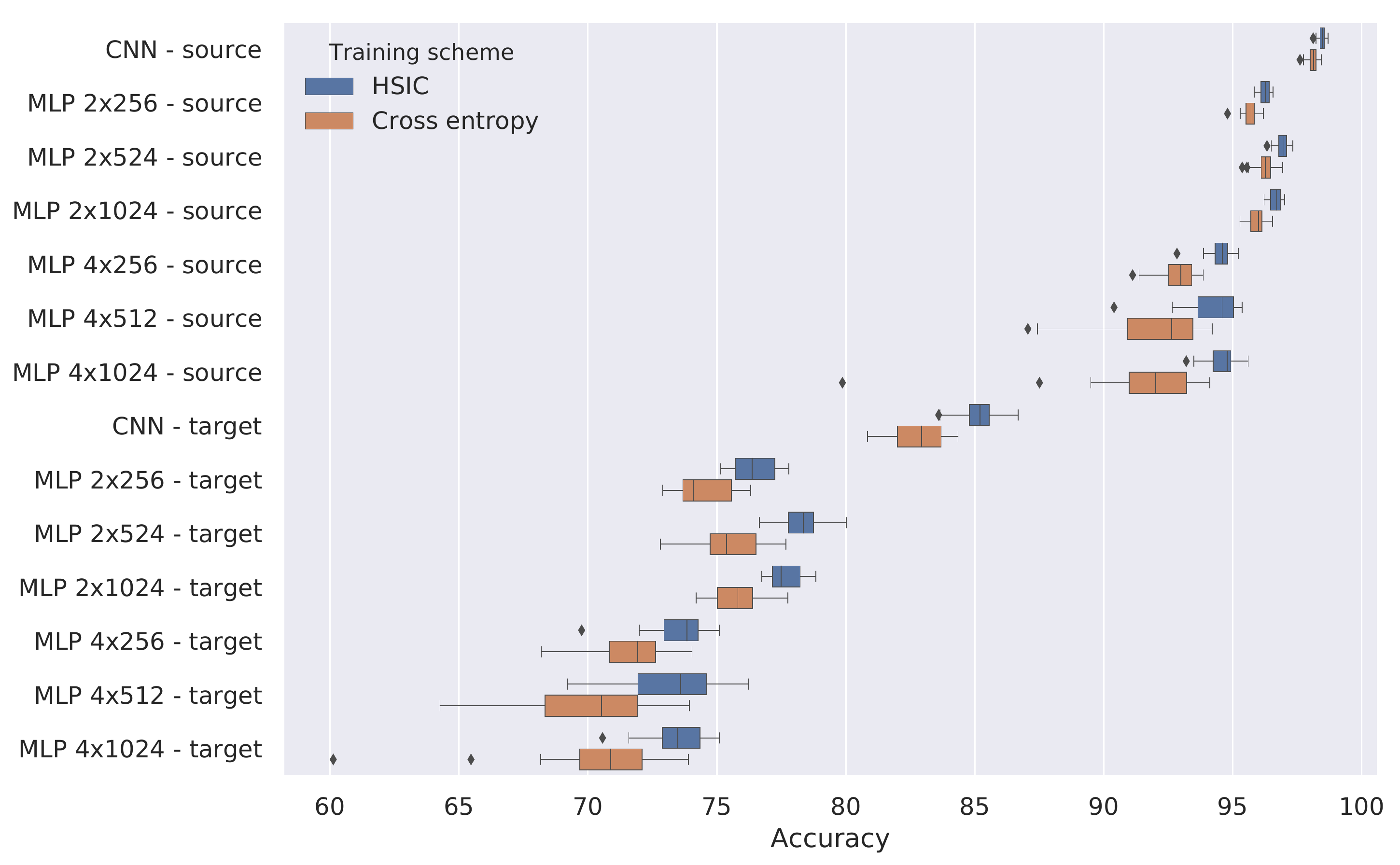}
      \caption{Accuracy on \textsc{source} and \textsc{target} test sets, with models trained with either cross entropy or HSIC-loss. Plotted are the median, 25th and 75th percentiles.}
    \label{rotated_mnist_results}
\end{figure}
\subsection{Cell Out of Sample Dataset}
In the last experiment, we test our approach on the cell out of sample dataset introduced by \citet{lu2019cells}. This dataset was collected for the purpose of measuring robustness against covariate shift. It consists of $64\times 64$ microscopy images of mouse cells stained with one of seven possible fluorescent proteins (highlighting distinct parts of the cell), and the task is to predict the type of the fluorescent protein used to stain the cell. Learning systems trained on microscopy data are known to suffer from changes in plates, wells and instruments \cite{caicedo2017data}. Attempting to simulate these conditions, the dataset contains four test sets, with increasing degrees of change in the covariates' distribution, as described in Table \ref{cell_description}, adopted from \citet{lu2019cells}.
Following \cite{lu2019cells}, we trained an 11-layer CNN, DeepLoc, used in \cite{kraus2017automated} for protein subcellular localization. We followed the pre-processing, data augmentation, architecture choice, and training procedures described there, with the exception of using HSIC-loss and different learning rate when using HSIC. When computing HSIC, the kernel width was set to 1 for both kernels. Training was done for 50 epochs on 80\% of the \textsc{source} dataset, and the final model was chosen according to the remaining 20\% used as a validation set. The optimization was done with Adam \cite{kingma2014adam}, with batch size of 128, and exponential decay of the learning rate was used when training with cross-entropy loss. \citet{lu2019cells} used data augmentation during training (random cropping and random flips) and test time (prediction is averaged over 5 crops taken from the corners and center image), as this is a common procedure to encourage robustness. We compared HSIC-based models to cross-entropy based models both with and without test time augmentation. We note that \cite{lu2019cells} examined several deep net models and DeepLoc (with cross-entropy training) had the best results.
\begin{table}
\caption{Description of the source and target distributions in the cell out of sample dataset}
{\small
\begin{tabular}{l@{\hskip 1mm}|p{54mm}@{\hskip 1mm}|l@{\hskip 1mm}l@{\hskip 1mm}l@{\hskip 1mm}l} 
\hline
Dataset     & Description       & Size \\ 
\hline
Source  & Images from 4 independent
plates for each class & 41,456
\\
Target1 & Held out data &10,364  \\
Target2 & Same plates,
but different wells & 17,021\\
Target3 & 2 independent plates for each class,
different days   &32,596
\\
Target4 & 1 plate for each class, 
different day and microscope & 30,772
\\
\hline
\end{tabular}}
\vspace{-10pt}

\label{cell_description}
\end{table}
\begin{table}
\caption{Class balanced accuracy on each of the four target distributions. The last row depicts the results of training with cross-entropy as reported in \cite{lu2019cells}. HSIC-aug and CE-aug refer to experiments done with test time augmentation.}
{\small
\begin{tabular}{l@{\hskip 2mm}l@{\hskip 2mm}l@{\hskip 2mm}l@{\hskip 2mm}l@{\hskip 2mm}l@{\hskip 2mm}l} 
\hline
Training loss     & Target1       & Target2 & Target3 & Target4 \\ 
\hline
HSIC  & 99.2 & 98.8 & 93.4 & 95.3
\\
CE  & 98.4 & 98.1 & 91.7 & 93.8
\\
\hline
HSIC-aug  & 99.2 & 98.9 & 93.4 & 95.4
\\
CE-aug-\cite{lu2019cells} &98.8 & 98.5 & 92.6 & 94.6
\\
\end{tabular}}
\vspace{-10pt}
\label{cell_results}
\end{table}

Table \ref{cell_results} depicts the results, showing a clear advantage of the HSIC-based model which is able to achieve new state-of-the-art results in the more difficult \textsc{target} distributions, while preserving the performance in \textsc{target} distributions closer to the \textsc{source}.

\section{Conclusion}
\label{Conclusion}
In this paper we propose learning models whose errors are independent of their inputs. This can be viewed as a non-parametric generalization of the way residuals are orthogonal to the instances in OLS regression. We prove that the HSIC-loss is learnable in terms of uniform convergence, and show that this loss naturally comes with a strong notion of robustness against changes in the input distribution when the change can be described in a bounded RKHS. The main theoretical limitations of our approach are the assumption that the changes are only in the marginal distribution of $X$, and that those changes are smooth in some sense. However, we show in our experiments that, in comparison to standard loss functions, the HSIC-loss produces models which perform just as well on the source distribution, and are significantly better on the target distribution, including state-of-the-art results on a challenging benchmark. An interesting future direction is to better understand the connection between this type of loss, originally proposed in the context of learning causal graph structure, and the idea of learning causal models that are expected to be robust against distributional changes \citep{meinshausen2018causality,arjovsky2019invariant}.

\section{Acknowledgments}
\label{Acknowledgments}
The authors would like to thank David Lopez-Paz for helpful discussions. This research was partially supported by the Israel Science Foundation (grant No. 1950/19).

\bibliography{main}
\bibliographystyle{abbrvnat}

\section{Appendix}
\label{Appendix}
\subsection{Batch-size effect}
Here we experiment with the effect of the batch-size on the performance of the HSIC-loss. The empirical HSIC value estimated on $m$ samples has bias of $O(1/m)$. This suggests that training with small batch-size might have a strong negative effect. However, in practice we have found that beyond the obvious computational cost of large batches (which scales as $m^2$), using larger mini-batches lead to worse performance in terms of accuracy. In the attached PDF we show that performance on the synthetic data setup using $m=64$ or 128 is considerably worse than $m=32$. We leave investigating the reasons for this interesting behavior, and the dynamics of gradients of HSIC in general, to future work.
\begin{figure}[]
    \centering
    \includegraphics[scale=0.6]{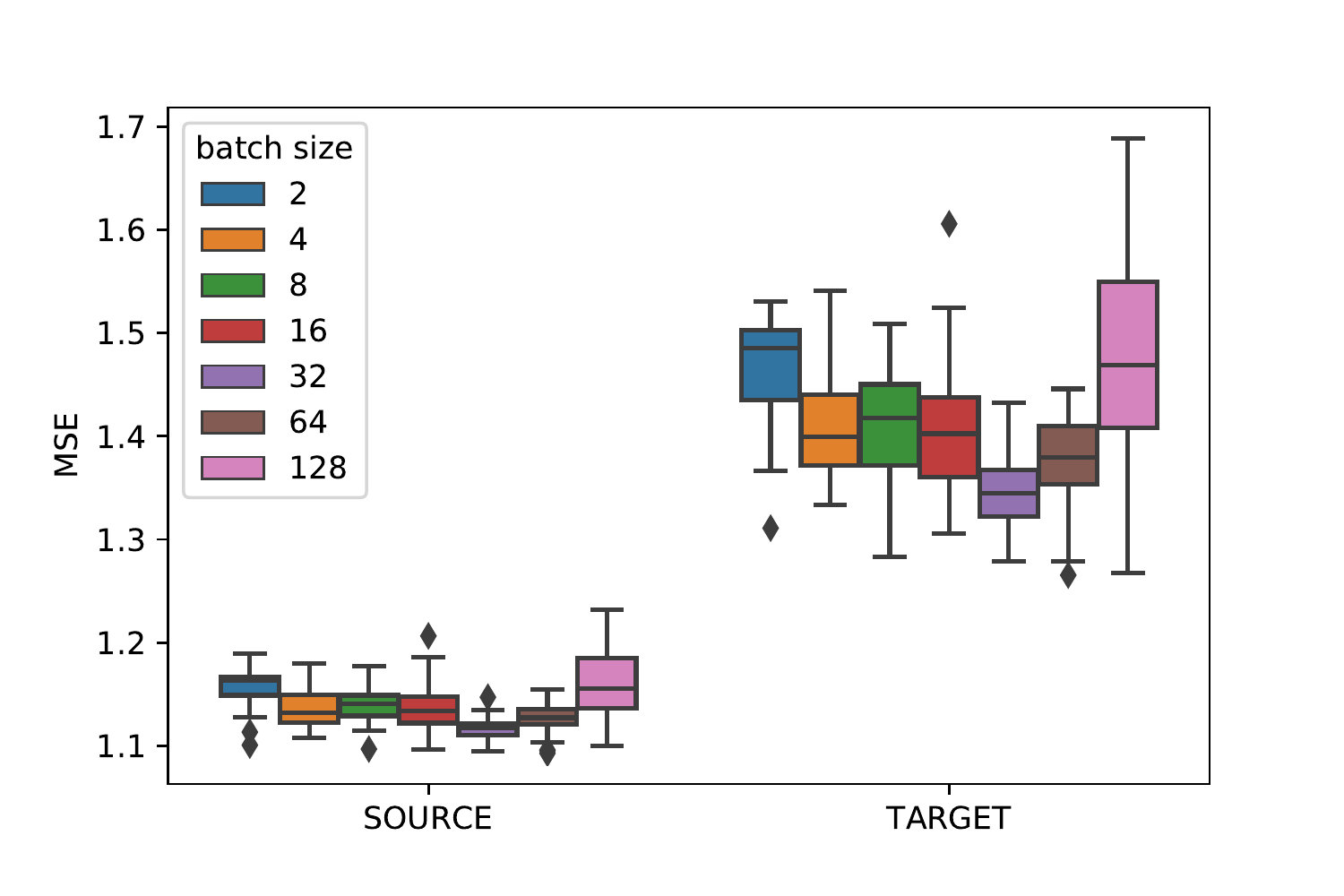}
      \caption{Effect of batch-size on performance on the linear regression experiment.}
    \label{batch_size_effect}
\end{figure}

\subsection{Theorems' implications}
In unsupervised covariate shift scenarios, there is always a tradeoff between how different do we allow the target distribution to be, versus the guarantees we can give for the target performance of a model trained on the source. In this work, we focus on specifying this tradeoff in terms of norms of RKHS spaces: the norm controls the complexity of the allowable target distribution set, and also multiplies components of the bounds. In the discussion of the theorems we argued that for $f^\star$ and functions near it, our bounds are much tighter compared to a bounds based on the infinity norm of the density ratio. Indeed, note that plugging $h=f^\star$ in Theorem \ref{robustness} we get: 
\begin{align*}
        &\sup_{\substack{P_\text{target} \in \mathcal{Q}\\\ell\in\G}} \E_{x\sim P_\text{target}} [\ell(Y-f^\star(X))] - \E_{x\sim P_\text{source}} [\ell(Y-f^\star(X))] \\
        &\le  0,
\end{align*}
which is a tight bound. When attempting to give a similar result using the infinity-norm of $\frac{P_\text{target}}{P_\text{source}} - 1$, the bounds become far from tight:
\begin{align*}
        &\sup_{\substack{P_\text{target} \in \mathcal{Q}\\\ell\in\G}} \E_{x\sim P_\text{target}} [\ell(Y-f^\star(X))] - \E_{x\sim P_\text{source}} [\ell(Y-f^\star(X))] \\
        &=\sup_{\substack{P_\text{target} \in \mathcal{Q}\\\ell\in\G}} \E_{x\sim P_\text{source}} \left[\left(\frac{P_\text{target}}{P_\text{source}}-1\right)\ell(Y-f^\star(X))\right] \\
        &\le C \cdot \sup_{\substack{P_\text{target} \in \mathcal{Q}\\\ell\in\G}} \E_{x\sim P_\text{source}} [\ell(Y-f^\star(X))],
\end{align*}
where $C$ bounds the infinity norm of $\frac{P_\text{target}}{P_\text{source}} - 1$. The fact that the HSIC bound is tighter justifies its use a loss function better suited for dealing with unsupervised covariate shift compared to simply minimizing a standard loss such as MSE.
This tighter bound comes at a cost: there are many cases where the infinity-norm of $\frac{P_\text{target}}{P_\text{source}} - 1$ is bounded but the Hilbert norm is unbounded, meaning that the density-ratio is not within the RKHS. Consider $p(k) = a/k^2, q(k) = b/(k+1)^2$ for $k=1,2,...$ , where $a$ and $b$ are the normalizing constants and $k$ is all positive integers. In this case $||q/p-1||_\infty$ is bounded by roughly $1.55$, but the $l_2$ norm diverges to infinity. This means that our set of distributions $\mathcal{Q}$ is a strictly smaller subset of the set of infinity-norm bounded ratios, while our bounds are correspondingly more informative.  %It is very hard to say much about the HSIC in this case, where the ratio is not in the RKHS for any RKHS norm $M$.

\subsection{Proofs}
Bellow we give the omitted proofs.
\subsubsection{Optimality proof}
We prove that $Y-h(X) \indep X$ implies that $Y\indep X \mid h(X)$. 
\begin{proof}
First, we note that this is equivalent to prove that $Y-h(X)\indep X \mid h(X)$, since the conditioning makes $h(X)$ just a constant. Now, 
\begin{equation*}
    \begin{split}
        &P\left(Y-h(X)\mid X,h(X)\right) \\
        &= P\left(Y-h(X)\mid X\right) \\
        &= P\left(Y-h(X)\right) \\
        &= P\left(Y-h(X)\mid h(X)\right),
    \end{split}
\end{equation*}
where the last equality is due to the fact that if the residual is independent of $X$, it is also independent of $h(X)$
\end{proof}
\subsubsection{Proof of lemma \ref{lem}}
\begin{proof}
We prove that $\sup_{s\in\F,t\in\G}\cov[s(X),t(Y)] \leq M_\F \cdot M_\G\sup_{s\in\Tilde{\F},t\in\Tilde{\G}}\cov[s(X),t(Y)]$. The other direction of the inequality is true by that same arguments. 

Let $\{s_i\}_{i=1}^\infty \subset \F$ and $\{t_i\}_{i=1}^\infty \subset \G$ be such that
\begin{align*}
    \lim_{n \to \infty}\cov[s_i(X),t_i(Y)]=\sup_{s\in\F,t\in\G}\cov[s(X),t(Y)].  
\end{align*}
Since for all $i$, $\frac{1}{M_\F}\cdot s_i \in \Tilde{\F}$ and likewise $\frac{1}{M_\G}\cdot t_i \in \Tilde{\G}$, we have that
\begin{align*}
    &\sup_{s\in\F,t\in\G}\cov[s(X),t(Y)] \\
    &= \lim_{n \to \infty}\cov[s_i(X),t_i(Y)] \\
    &= M_\F\cdot M_\G\lim_{n \to \infty}\cov[\frac{1}{M_\F}s_i(X),\frac{1}{M_\G}t_i(Y)] \\
    & \le M_\F\cdot M_\G \sup_{s\in\tilde{\F},t\in\tilde{\G}}\cov[s(X),t(Y)].
\end{align*}
\end{proof}
\subsubsection{Proof of theorem \ref{lower bound}}
\begin{proof}
Expanding the HSIC-loss:
\begin{align*}
&\text{HSIC}(X,Y-h(X);\Tilde{\F},\Tilde{\G})	\\\ge&\sup_{s\in\Tilde{\F},t\in\Tilde{\G}}\cov\left(s\left(X\right),t\left(Y-h\left(X\right)\right)\right) \\
=& \frac{1}{M_\F \cdot M_\G} \sup_{s\in\F,t\in\G}\cov\left(s\left(X\right),t\left(Y-h\left(X\right)\right)\right) \\
=& \frac{1}{M_\F \cdot M_\G}\sup_{s\in\F,t\in\G}\cov\left(s\left(X\right),t\left(f^{\star}\left(X\right)-h\left(X\right)+\varepsilon\right)\right),
\end{align*}
where the first inequality is due to Eq. \eqref{eq:hsiccoco}, and the first equality by Lemma \ref{lem}.
Now, by the assumption that $f^\star-h$ is in the closure of $\F$, there exist $s_{n}\in\F$ s.t. $ \left(s_{n}\right)_{n=1}^{\infty}$ converge to $f^{\star}-h$ under the infinity norm. 
Taking $t$ to be the identity function, we get that for all $n \in \mathbb{N}$:
\begin{align*}
    &\sup_{s\in\F}\cov\left(s\left(X\right),f^{\star}\left(X\right)-h\left(X\right)+\varepsilon\right) \\	&\ge\cov\left(s_{n}\left(X\right),t\left(f^{\star}\left(X\right)-h\left(X\right)+\varepsilon\right)\right),
\end{align*}
which implies that 
\begin{align*}
    &\sup_{s\in\F,t\in\G}\cov\left(s\left(X\right),t\left(f^{\star}\left(X\right)-h\left(X\right)+\varepsilon\right)\right)	\\
    \ge& \lim_{n\to \infty}\cov\left(s_{n}\left(X\right),f^{\star}\left(X\right)-h\left(X\right)+\varepsilon\right) \\
    =&\cov\left(f^{\star}\left(X\right)-h\left(X\right),f^{\star}\left(X\right)-h\left(X\right)+\varepsilon\right) \\
    =& \var\left(f^{\star}\left(X\right)-h\left(X\right)\right).
\end{align*}
\end{proof}

\subsubsection{Proof of theorem \ref{subgroup}}
\begin{proof}
First, we note that 
\begin{equation*}
    \begin{split}
        &\left|\mathbb{E}\left[s_{A}\left(x\right)\ell\left(y-h(x)\right)\right]-\mathbb{E}\left[1_{A}\left(x\right)\ell\left(y-h(x)\right)\right]\right| \\ &\le	\left|\mathbb{E}\left[\left(s_{A}\left(x\right)-1_{A}\left(x\right)\right)\ell\left(y-h(x)\right)\right]\right| \\&
        \le	\mathbb{E}\left[\left|s_{A}\left(x\right)-1_{A}\left(x\right)\right|\ell\left(y-h(x)\right)\right] \\&
        \le	\delta\mathbb{E}\left[\ell\left(y-h(x)\right)\right].
    \end{split}
\end{equation*}
And therefore
\begin{equation*}
    \begin{split}
        &\mathbb{E}\left[1_{A}\left(x\right)\ell\left(y-h(x)\right)\right] \\
        &\le \mathbb{E}\left[s_{A}\left(x\right)\ell\left(y-h(x)\right)\right]+\delta\mathbb{E}\left[\ell\left(y-h(x)\right)\right].
    \end{split}
\end{equation*}
Now, by definition, 
\begin{equation*}
    \begin{split}
        &\mathbb{E}\left[s_{A}\left(x\right)\ell\left(y-h(x)\right)\right]\\	&\le \mathbb{E}\left[s_{A}\left(x\right)\right]\mathbb{E}\left[\ell\left(y-h(x)\right)\right]\\&+M_{\mathcal{F}}M_{\mathcal{G}}HSIC\left(x,y-h(x);\tilde{\mathcal{F}},\tilde{\mathcal{G}}\right).
    \end{split}
\end{equation*}
Similar to before, 
\begin{equation*}
    \begin{split}
        &\left|\mathbb{E}\left[s_{A}\left(x\right)\right]\mathbb{E}\left[\ell\left(y-h(x)\right)\right]-\mathbb{E}\left[1_{A}\left(x\right)\right]\mathbb{E}\left[\ell\left(y-h(x)\right)\right]\right|\\
        &\le\delta\mathbb{E}\left[\ell\left(y-h(x)\right)\right],
    \end{split}
\end{equation*}
and therefore,
\begin{equation*}
    \begin{split}
        &\mathbb{E}\left[s_{A}\left(x\right)\right]\mathbb{E}\left[\ell\left(y-h(x)\right)\right]\\
        &\le\mathbb{E}\left[1_{A}\left(x\right)\right]\mathbb{E}\left[\ell\left(y-h(x)\right)\right]+\delta\mathbb{E}\left[\ell\left(y-h(x)\right)\right].
    \end{split}
\end{equation*}
Combining all the above:
\begin{equation*}
    \begin{split}
        &\frac{\mathbb{E}\left[1_{A}\left(x\right)\ell\left(y-h(x)\right)\right]}{\mathbb{E}\left[1_{A}\right]} \\
        &\le \frac{\mathbb{E}\left[s_{A}\left(x\right)\ell\left(y-h(x)\right)\right]+\delta\mathbb{E}\left[\ell\left(y-h(x)\right)\right]}{\mathbb{E}\left[1_{A}\right]} \\
        &\le \frac{\mathbb{E}\left[s_{A}\left(x\right)\right]\mathbb{E}\left[\ell\left(y-h(x)\right)\right]}{\mathbb{E}\left[1_{A}\right]} \\
        &+\frac{M_{\mathcal{F}}M_{\mathcal{G}}HSIC\left(x,y-h(x);\tilde{\mathcal{F}},\tilde{\mathcal{G}}\right)+\delta\mathbb{E}\left[\ell\left(y-h(x)\right)\right]}{\mathbb{E}\left[1_{A}\right]}\\
        &\le \frac{\mathbb{E}\left[1_{A}\left(x\right)\right]\mathbb{E}\left[\ell\left(y-h(x)\right)\right]+\delta\mathbb{E}\left[\ell\left(y-h(x)\right)\right]}{\mathbb{E}\left[1_{A}\right]} \\
        &+ \frac{M_{\mathcal{F}}M_{\mathcal{G}}HSIC\left(x,y-h(x);\tilde{\mathcal{F}},\tilde{\mathcal{G}}\right)+\delta\mathbb{E}\left[\ell\left(y-h(x)\right)\right]}{\mathbb{E}\left[1_{A}\right]} \\
        &= \frac{M_{\mathcal{F}}M_{\mathcal{G}}HSIC\left(x,y-h(x);\tilde{\mathcal{F}},\tilde{\mathcal{G}}\right)+2\delta\mathbb{E}\left[\ell\left(y-h(x)\right)\right]}{\mathbb{E}\left[1_{A}\right]} \\
        &+\mathbb{E}\left[\ell\left(y-h(x)\right)\right] \\
        &\le \frac{M_{\mathcal{F}}M_{\mathcal{G}}HSIC\left(x,y-h(x);\tilde{\mathcal{F}},\tilde{\mathcal{G}}\right)+2\delta\mathbb{E}\left[\ell\left(y-h(x)\right)\right]}{c} \\
        &+\mathbb{E}\left[\ell\left(y-h(x)\right)\right] \\
        &=\frac{M_{\mathcal{F}}M_{\mathcal{G}}HSIC\left(x,y-h(x);\tilde{\mathcal{F}},\tilde{\mathcal{G}}\right)}{c} \\
        &+\left(\frac{2\delta}{c}+1\right)\mathbb{E}\left[\ell\left(y-h(x)\right)\right]
    \end{split}
\end{equation*}
\end{proof}

\subsubsection{Proof of theorem \ref{robustness}}
\begin{proof}
We have:
\begin{equation*}
    \begin{split}
     &\text{HSIC}(X,Y-h(X);\Tilde{\F},\Tilde{\G}) \\
     \ge& \sup_{s\in \Tilde{\F}, \ell\in \Tilde{\G}}\bigg(\E_{x\sim\mathcal{P}}[s(X)\ell(Y-h(X))]\\
     &-E_{x\sim\mathcal{P}_\text{source}}[s(X)]E_{x\sim\mathcal{P}_\text{source}}[\ell(Y-h(X))]\bigg)\\
     =& \frac{1}{M_\F \cdot M_\G} \sup_{s\in \F, \ell\in \G}\bigg(\E_{x\sim\mathcal{P}_\text{source}}[s(X)\ell(Y-h(X))] \\
     &-\E_{x\sim\mathcal{P}_\text{source}}[s(X)]\E_{x\sim\mathcal{P}_\text{source}}[\ell(Y-h(X))] \bigg)\\
    \ge&\frac{1}{M_\F \cdot M_\G}\sup_{s\in \mathcal{S}, \ell\in \G}\bigg(\E_{x\sim\mathcal{P}_\text{source}}[s(X)\ell(Y-h(X))]\\
    &-\E_{x\sim\mathcal{P}_\text{source}}[s(X)]\E_{x\sim\mathcal{P}_\text{source}}[\ell(Y-h(X))] \bigg)\\
    =&\frac{1}{M_\F \cdot M_\G} \sup_{\mathcal{P}_\text{target} \in \mathcal{Q}}
    \sup_{\ell\in \G}\bigg(\E_{x\sim\mathcal{P}_\text{target}}[\ell(Y-h(X))]\\
    &-\E_{x\sim\mathcal{P}_\text{source}}[\ell(Y-h(X))]\bigg).
    \end{split}
\end{equation*}

The first equality is an immediate result of \eqref{eq:hsiccoco} and the identity $\cov(A,B) = \E[A B]-\E[A]\E[B]$. The second inequality is by the restriction from $\F$ to $\mathcal{S}$. The final equality is by the assumption that for all $\mathcal{P}_\text{target} \in \mathcal{Q}$, $\E_{x\sim \mathcal{P}}\left[\frac{P_\text{target}}{P_\text{source}}(x)\right]=1$, i.e. that $\mathcal{P}_\text{target}$  is absolutely continuous with respect to $\mathcal{P}_\text{source}$.
\end{proof}

\subsubsection{Proof of theorem \ref{thm:together}}
\begin{proof}
By the lower bound of Theorem \ref{lower bound}, we get $\var(f^\star(X)-h(X)) \le M_\F\cdot M_\G \cdot \delta_{\text{HSIC}}$. By Theorem \ref{robustness} and the assumption we get that:
{\small
\begin{align*}
    &M_\F\cdot M_\G \cdot \delta_{\text{HSIC}}(h) \\
    \ge& \sup_{\mathcal{P}_\text{target} \in \mathcal{Q}} \E_{x\sim \mathcal{P}_\text{target}} [(Y-h(X))^2] - \E_{x\sim \mathcal{P}_\text{source}} [(Y-h(X))^2] \\
    =& \sup_{\mathcal{P}_\text{target} \in \mathcal{Q}} \bigg(\E_{x\sim \mathcal{P}_\text{target}} [(Y-h(X))^2] \\
    &- \E_{x\sim \mathcal{P}_\text{source}} [(f^\star(X)+\varepsilon-h(X))^2]\bigg) \\
    =& \sup_{\mathcal{P}_\text{target} \in \mathcal{Q}} \bigg(\E_{x\sim \mathcal{P}_\text{target}} [(Y-h(X))^2]- \var[f^\star(X)-h(X)] \\
    & - (\E_{x\sim \mathcal{P}_\text{source}} [f^\star(X)-h(X)])^2-\E[\varepsilon^2]\bigg) \\
    =& \sup_{\mathcal{P}_\text{target} \in \mathcal{Q}}\bigg(\textit{MSE}_{P_\text{target}}(h) - \var[f^\star(X)-h(X)] \\
    &- {\textit{bias}_\text{source}(h)}^2 - \sigma^2\bigg).
\end{align*}}
Together, these inequalities give the result.
\end{proof}

\subsubsection{Proof of theorem \ref{leanability_thm}}
Bellow we give the proof of Theorem \ref{leanability_thm}, showing uniform convergence for the HSIC-loss. The basic idea is to reduce the problem to a standard learning problem of the form $\sup_{h\in\mathcal{H}}\left|z(h)-\frac{1}{n}\sum_{i=1}^n z_i(h)\right|$, where $z$ is some statistic, and $z_i(h)$ are i.i.d. samples. To do so, we follow \citet{gretton2005measuring}, decomposing the HSIC-loss to three terms, and after some algebraic manipulations, and under some assumptions about the form of the kernel, we show that there is one term which we need to bound:
$$\sup_{h\in\mathcal{H}}\left|\mathbb{E}_{r,r^{\prime}}\left[k\left(r,r^{\prime}\right)\right]-\frac{1}{\left(n\right)_{2}}\sum_{i_{2}^{n}}k\left(r_{i_{1}},r_{i_{2}}\right)\right|.$$
We then show how we can treat this as a learning problem over pairs of instances, where the objective is to predict the difference in $y$, allowing us to use standard tools and concentration bounds.

Recall that the empirical estimation problem we pose is
{
\begin{align}\label{eq:l_problem}
    \min_\theta\widehat{\text{HSIC}}\{(x_i,r^\theta_i)\}_{i=1}^n;\F,\G) = \min_\theta\frac{1}{(m-1)^2} \textbf{tr} R^\theta HKH,
\end{align}}
where $R^\theta _{i,j}=k(r_i^\theta,r_j^\theta),\, r_l^\theta=y_l-h_\theta (x_l),$ and $K_{i,j}=l(x_i,x_j)$ and by the cyclic property of the trace we switched the positions of $R^\theta$ and $K$.
\begin{lemma}
Let $C_1=\sup_{x,x^\prime}l(x,x^\prime)$, $C_2=\sup_{r,r^\prime}k(r,r^\prime)$. Then the following holds:
{\small
\begin{align*}
    &\sup_{h \in \mathcal{H}} \left|\text{HSIC}(X,Y-h(X);\F,\G) - \widehat{\text{HSIC}}\left(\{(x_i,r_i)\}_{i=1}^n;\F,\G\right)\right|\\
    \le &3C_1 \cdot \sup_{h\in \mathcal{H}}  \left|\E_{r,r^\prime}[k(r,r^\prime)] - \frac{1}{(n)_2} \sum_{i_2^n}k(r_{i_1},r_{i_2})\right| \\
    &+ 3C_2 \cdot \left|\E_{x,x^\prime} [l(x,x^\prime] - \frac{1}{(n)_2}\sum_{i_2^n}l(x_{i_1},x_{i_2})\right|
\end{align*}
}\label{learnability lemma}
\end{lemma}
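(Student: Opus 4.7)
The plan is to reduce the HSIC deviation to two simpler empirical-averaging errors: one on the residual kernel $k$, which depends on $h$ (and will later admit a Rademacher-complexity bound uniform over $\mathcal{H}$), and one on the instance kernel $l$, which is $h$-free and can be handled by standard U-statistic concentration. I would proceed via the standard expansion of $\mathrm{HSIC}$ into three expectation summands given by Theorem~3 of \citet{gretton2005measuring}:
\begin{align*}
\mathrm{HSIC}(X, Y{-}h(X); \F, \G) &= \E[l(x,x')k(r,r')] + \E[l(x,x')]\,\E[k(r,r')] \\
&\quad - 2\,\E_{(x,r)}\!\left[\E_{x'}[l(x,x')]\,\E_{r'}[k(r,r')]\right],
\end{align*}
with $r = Y - h(X)$, and an analogous decomposition of the empirical trace $\tfrac{1}{(n-1)^2}\,\mathrm{tr}(KHLH)$ into the three corresponding V-statistic estimators, obtained by expanding $H = I - \tfrac{1}{n}\mathbf{1}\mathbf{1}^\top$.

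Applying the triangle inequality yields three term-wise deviations. For each, the target is a bound of the form $C_1\,|e_k(h)| + C_2\,|e_l|$, where $e_k(h) = \E[k(r,r')] - \tfrac{1}{(n)_2}\sum_{i \ne j}k(r_i,r_j)$ and $e_l = \E[l(x,x')] - \tfrac{1}{(n)_2}\sum_{i \ne j}l(x_i,x_j)$. For the pure product term $\E[l]\,\E[k]$ I would use the elementary identity $ab - \hat{a}\hat{b} = a(b-\hat{b}) + \hat{b}(a-\hat{a})$ together with the kernel bounds $|a| \le C_1$ and $|\hat{b}| \le C_2$. For the coupled term $\E[l(x,x')k(r,r')]$ the key step is to add and subtract an intermediate mixed quantity such as $\E[l]\cdot\tfrac{1}{(n)_2}\sum_{i \ne j} k(r_i,r_j)$, which isolates a $C_1$-factor in one piece, leaving an empirical $k$-error, and symmetrically a $C_2$-factor in the other, leaving an empirical $l$-error. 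The mixed third term is treated analogously, by exploiting that its inner marginals factorize cleanly across $x$ and $r$.

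Once each of the three term-wise deviations is bounded by a multiple of $C_1\,|e_k(h)| + C_2\,|e_l|$, summing them and taking the supremum over $h \in \mathcal{H}$ gives the claim: the $e_l$ term stays outside the supremum since $l$ does not depend on $h$, while $|e_k(h)|$ goes inside. The main obstacle is the coupled term $\E[l(x,x')k(r,r')] - \tfrac{1}{(n)_2}\sum l(x_i,x_j)k(r_i,r_j)$, since a naive bound by $C_1 C_2$ times a joint U-statistic error would not separate into the additive $C_1 |e_k| + C_2 |e_l|$ form that is essential for treating the $h$-dependent and $h$-independent parts with different tools in the final theorem. Careful bookkeeping of constants through the three summands, together with the symmetry of the coupled and mixed terms, yields the stated coefficient of $3$ on both sides of the decomposition.
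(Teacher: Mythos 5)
Your proposal follows essentially the same route as the paper's proof: the same three-term expansion of the population and empirical HSIC, the same add-and-subtract of a mixed intermediate quantity (e.g.\ $\E[l]\cdot\tfrac{1}{(n)_2}\sum k(r_{i_1},r_{i_2})$) in each summand to split off a $C_1$-weighted residual-kernel error and a $C_2$-weighted instance-kernel error, and the same bookkeeping that produces the factor of $3$. The only cosmetic difference is that you phrase the empirical decomposition as V-statistics obtained by expanding $H$, whereas the paper works with the U-statistic form plus an $O(1/n)$ bias term; this does not change the argument.
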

\begin{proof}
Following \cite{gretton2005measuring}, HSIC can be decomposed into a three part sum:
\begin{equation*}
    \begin{split}
    &HSIC\left(X,Y-h\left(X\right);\mathcal{F},\mathcal{G}\right)\\
    =&\mathbb{E}_{x,x^{\prime},r,r^{\prime}}\left[k\left(r,r^{\prime}\right)l\left(x,x^{\prime}\right)\right]\\
    &-2\mathbb{E}_{x,r}\left[\mathbb{E}_{x^{\prime}}\left[l\left(x,x^{\prime}\right)\right]\mathbb{E}_{r^{\prime}}\left[k\left(r,r^{\prime}\right)\right]\right]\\
    &+\mathbb{E}_{r,r^{\prime}}\left[k\left(r,r^{\prime}\right)\right]\mathbb{E}_{x,x^{\prime}}\left[l\left(x,x^{\prime}\right)\right].
    \end{split}
\end{equation*}
And likewise, the empirical HSIC can be decomposed as follows:
\begin{equation*}
    \begin{split}
        &\widehat{HSIC}\left(\left\{ \left(x_{i},r_{i}\right)\right\} _{i=1}^{n};\mathcal{F},\mathcal{G}\right)\\
        =&\frac{1}{\left(n\right)_{2}}\sum_{i_{2}^{n}}k\left(r_{i_{1}},r_{i_{2}}\right)l\left(x_{i_{1}},x_{i_{2}}\right)\\
        &-\frac{2}{\left(n\right)_{3}}\sum_{i_{3}^{n}}k\left(r_{i_{1}},r_{i_{2}}\right)l\left(x_{i_{2}},x_{i_{3}}\right)\\
        &+\frac{1}{\left(n\right)_{4}}\sum_{i_{4}^{n}}k\left(r_{i_{1}},r_{i_{2}}\right)l\left(x_{i_{3}},x_{i_{4}}\right)+O\left(\frac{1}{n}\right).
    \end{split}
\end{equation*}
where $\left(n\right)_{m}=\frac{n!}{\left(n-m\right)!}$ and $i_r^n$ is the set of all $r-$tuples drawn without replacement from $[n]$. From this we can see that it is enough to bound the following three terms:
\begin{align}\label{term_1}
\sup_{h\in\mathcal{H}}& \left|\vphantom{\sum_{i_{2}^{n}}}\mathbb{E}_{x,x^{\prime},r,r^{\prime}}\left[k\left(r,r^{\prime}\right)l\left(x,x^{\prime}\right)\right]\right. \\
&\left. -\frac{1}{\left(n\right)_{2}}\sum_{i_{2}^{n}}k\left(r_{i_{1}},r_{i_{2}}\right)l\left(x_{i_{1}},x_{i_{2}}\right)\right|, 
\end{align}
\begin{equation}
\begin{split} \label{term_2}
\sup_{h\in\mathcal{H}}&\left|\vphantom{\sum_{i_{2}^{n}}}\mathbb{E}_{x,r}\left[\mathbb{E}_{x^{\prime}}\left[l\left(x,x^{\prime}\right)\right]\mathbb{E}_{r^{\prime}}\left[k\left(r,r^{\prime}\right)\right]\right] \right.\\ 
& \left.-\frac{1}{\left(n\right)_{3}}\sum_{i_{3}^{n}}k\left(r_{i_{1}},r_{i_{2}}\right)l\left(x_{i_{2}},x_{i_{3}}\right)\right|,
\end{split}
\end{equation}
\begin{equation} \label{term_3}
\begin{split}
\sup_{h\in\mathcal{H}}&\left|\vphantom{\sum_{i_{2}^{n}}}\mathbb{E}_{r,r^{\prime}}\left[k\left(r,r^{\prime}\right)\right]\mathbb{E}_{x,x^{\prime}}\left[l\left(x,x^{\prime}\right)\right]\right.\\
&\left.-\frac{1}{\left(n\right)_{4}}\sum_{i_{4}^{n}}k\left(r_{i_{1}},r_{i_{2}}\right)l\left(x_{i_{3}},x_{i_{4}}\right)\right|.
\end{split}
\end{equation}
Using simple algebra, one can obtain the following bound for \eqref{term_1}:
{\tiny
\begin{equation*}
    \begin{split}
    &\sup_{h\in\mathcal{H}}\left|\mathbb{E}_{x,x^{\prime},r,r^{\prime}}\left[k\left(r,r^{\prime}\right)l\left(x,x^{\prime}\right)\right]-\frac{1}{\left(n\right)_{2}}\sum_{i_{2}^{n}}k\left(r_{i_{1}},r_{i_{2}}\right)l\left(x_{i_{1}},x_{i_{2}}\right)\right| \\
	=&\sup_{h\in\mathcal{H}}\left|\mathbb{E}_{x,x^{\prime},r,r^{\prime}}\left[k\left(r,r^{\prime}\right)l\left(x,x^{\prime}\right)\right]-\frac{1}{\left(n\right)_{2}}\sum_{i_{2}^{n}}k\left(r_{i_{1}},r_{i_{2}}\right)l\left(x_{i_{1}},x_{i_{2}}\right) \right. \\
	&\left. +\frac{1}{\left(n\right)_{2}}\sum_{i_{2}^{n}}k\left(r_{i_{1}},r_{i_{2}}\right)\mathbb{E}\left[l\left(x,x^{\prime}\right)\right]-\frac{1}{\left(n\right)_{2}}\sum_{i_{2}^{n}}k\left(r_{i_{1}},r_{i_{2}}\right)\mathbb{E}\left[l\left(x,x^{\prime}\right)\right]\right| \\
	=&\sup_{h\in\mathcal{H}}\left|\mathbb{E}_{x,x^{\prime},r,r^{\prime}}\left[\left(k\left(r,r^{\prime}\right)-\frac{1}{\left(n\right)_{2}}\sum_{i_{2}^{n}}k\left(r_{i_{1}},r_{i_{2}}\right)\right)l\left(x,x^{\prime}\right)\right] \right.\\
	&\left. +\frac{1}{\left(n\right)_{2}}\sum_{i_{2}^{n}}k\left(r_{i_{1}},r_{i_{2}}\right)\left(\mathbb{E}\left[l\left(x,x^{\prime}\right)\right]-l\left(x_{i_{1}},x_{i_{2}}\right)\right)\right|\\
	\le&\sup_{h\in\mathcal{H}}\left|\mathbb{E}_{x,x^{\prime},r,r^{\prime}}\left[\left(k\left(r,r^{\prime}\right)-\frac{1}{\left(n\right)_{2}}\sum_{i_{2}^{n}}k\left(r_{i_{1}},r_{i_{2}}\right)\right)l\left(x,x^{\prime}\right)\right]\right|\\
	&+\sup_{h\in\mathcal{H}}\left|\frac{1}{\left(n\right)_{2}}\sum_{i_{2}^{n}}k\left(r_{i_{1}},r_{i_{2}}\right)\left(\mathbb{E}\left[l\left(x,x^{\prime}\right)\right]-l\left(x_{i_{1}},x_{i_{2}}\right)\right)\right| \\
	\le&C_1\sup_{h\in\mathcal{H}}\left|\mathbb{E}_{x,x^{\prime},r,r^{\prime}}\left[k\left(r,r^{\prime}\right)-\frac{1}{\left(n\right)_{2}}\sum_{i_{2}^{n}}k\left(r_{i_{1}},r_{i_{2}}\right)\right]\right|\\
	&+C_2\sup_{h\in\mathcal{H}}\left|\mathbb{E}\left[l\left(x,x^{\prime}\right)\right]-\frac{1}{\left(n\right)_{2}}\sum_{i_{2}^{n}}l\left(x_{i_{1}},x_{i_{2}}\right)\right| \\
	=&C_1\sup_{h\in\mathcal{H}}\left|\mathbb{E}_{r,r^{\prime}}\left[k\left(r,r^{\prime}\right)\right]-\frac{1}{\left(n\right)_{2}}\sum_{i_{2}^{n}}k\left(r_{i_{1}},r_{i_{2}}\right)\right|\\
	&+C_2\left|\mathbb{E}\left[l\left(x,x^{\prime}\right)\right]-\frac{1}{\left(n\right)_{2}}\sum_{i_{2}^{n}}l\left(x_{i_{1}},x_{i_{2}}\right)\right|.
	\end{split}
\end{equation*}
}
Where the first inequality follows from properties of $\sup$, the second inequality follows from the definitions of $C_1$ and $C_2$, and the last equality follows from the fact that the second term does not depend on $h$.
Similarly, \eqref{term_2} can be bounded as follows:
{\tiny
    \begin{equation*}
        \begin{split}
        &\sup_{h\in\mathcal{H}}\left|\vphantom{\sum_{i_{2}^{n}}}\mathbb{E}_{x,r}\left[\mathbb{E}_{x^{\prime}}\left[l\left(x,x^{\prime}\right)\right]\mathbb{E}_{r^{\prime}}\left[k\left(r,r^{\prime}\right)\right]\right]\right.\\
        &\left.-\frac{1}{\left(n\right)_{3}}\sum_{i_{3}^{n}}k\left(r_{i_{1}},r_{i_{2}}\right)l\left(x_{i_{2}},x_{i_{3}}\right)\right| \\
        = &\sup_{h\in\mathcal{H}}\left|\vphantom{\sum_{i_{2}^{n}}}\mathbb{E}_{x,r}\left[\mathbb{E}_{x^{\prime}}\left[l\left(x,x^{\prime}\right)\right]\mathbb{E}_{r^{\prime}}\left[k\left(r,r^{\prime}\right)\right]\right]\right.\\
        &\left.-\frac{1}{\left(n\right)_{3}}\sum_{i_{3}^{n}}k\left(r_{i_{1}},r_{i_{2}}\right)l\left(x_{i_{2}},x_{i_{3}}\right)\right.\\
        &+\left.{} \frac{1}{\left(n\right)_{3}}\sum_{i_{3}^{n}}k\left(r_{i_{1}},r_{i_{2}}\right)\mathbb{E}_{x,x^{\prime}}\left[l\left(x,x^{\prime}\right)\right]\right.\\
        &\left.-\frac{1}{\left(n\right)_{3}}\sum_{i_{3}^{n}}k\left(r_{i_{1}},r_{i_{2}}\right)\mathbb{E}_{x,x^{\prime}}\left[l\left(x,x^{\prime}\right)\right]\right|\\
    	= &\sup_{h\in\mathcal{H}}\left|\vphantom{\sum_{i_{2}^{n}}}\mathbb{E}_{x,r}\left[\vphantom{\sum_{i_{2}^{n}}}\mathbb{E}_{x^{\prime}}\left[l\left(x,x^{\prime}\right)\right]\mathbb{E}_{r^{\prime}}\left[k\left(r,r^{\prime}\right)\right]\right.\right.\\
    	&\left.\left.-\frac{1}{\left(n\right)_{3}}\sum_{i_{3}^{n}}k\left(r_{i_{1}},r_{i_{2}}\right)\mathbb{E}_{x^{\prime}}\left[l\left(x,x^{\prime}\right)\right]\right] \right.\\ &\left.+\frac{1}{\left(n\right)_{3}}\sum_{i_{3}^{n}}k\left(r_{i_{1}},r_{i_{2}}\right)\left(\mathbb{E}_{x,x^{\prime}}\left[l\left(x,x^{\prime}\right)\right]-l\left(x_{i_{2}},x_{i_{3}}\right)\right)\right| \\
    	= &\sup_{h\in\mathcal{H}}\left|\mathbb{E}_{x,r}\left[\frac{1}{\left(n\right)_{3}}\sum_{i_{3}^{n}}\bigg(\mathbb{E}_{r^{\prime}}\left[k\left(r,r^{\prime}\right)\right]\right.\right.\\
    	&\left.\left.-k\left(r_{i_{1}},r_{i_{2}}\right)\bigg)\mathbb{E}_{x^{\prime}}\left[l\left(x,x^{\prime}\right)\right]\vphantom{\sum_{i_{2}^{n}}}\right]\right.\\
    	&\left.+\frac{1}{\left(n\right)_{3}}\sum_{i_{3}^{n}}k\left(r_{i_{1}},r_{i_{2}}\right)\bigg(\mathbb{E}_{x,x^{\prime}}\left[l\left(x,x^{\prime}\right)\right]-l\left(x_{i_{2}},x_{i_{3}}\right)\bigg)\right| \\
    	\le &\sup_{h\in\mathcal{H}}\left|\mathbb{E}_{x,r}\left[\frac{1}{\left(n\right)_{3}}\sum_{i_{3}^{n}}\bigg(\mathbb{E}_{r^{\prime}}\left[k\left(r,r^{\prime}\right)\right]\right.\right.\\
    	&\left.\left.\vphantom{\sum_{i_{2}^{n}}}-k\left(r_{i_{1}},r_{i_{2}}\right)\bigg)\mathbb{E}_{x^{\prime}}\left[l\left(x,x^{\prime}\right)\right]\right]\right|\\
    	&+\sup_{h\in\mathcal{H}}\left|\frac{1}{\left(n\right)_{3}}\sum_{i_{3}^{n}}k\left(r_{i_{1}},r_{i_{2}}\right)\bigg(\mathbb{E}_{x,x^{\prime}}\left[l\left(x,x^{\prime}\right)\right]-l\left(x_{i_{2}},x_{i_{3}}\right)\bigg)\right| \\
    	\le &\sup_{x}\mathbb{E}_{x^{\prime}}\left[l\left(x,x^{\prime}\right)\right]\sup_{h\in\mathcal{H}}\left|\mathbb{E}_{x,r}\left[\mathbb{E}_{r^{\prime}}\left[k\left(r,r^{\prime}\right)\right]\vphantom{\sum_{i_{2}^{n}}}\right.\right.\\
    	&\left.\left.-\frac{1}{\left(n\right)_{3}}\sum_{i_{3}^{n}}k\left(r_{i_{1}},r_{i_{2}}\right)\right]\right| \\
    	&+\sup_{r,r^{\prime}}k\left(r,r^{\prime}\right)\sup_{h\in\mathcal{H}}\left|\mathbb{E}_{x,x^{\prime}}\left[l\left(x,x^{\prime}\right)\right]-\frac{1}{\left(n\right)_{3}}\sum_{i_{3}^{n}}l\left(x_{i_{2}},x_{i_{3}}\right)\right| \\
    	\le &C_1\sup_{h\in\mathcal{H}}\left|\mathbb{E}_{r,r^{\prime}}\left[k\left(r,r^{\prime}\right)\right]-\frac{1}{\left(n\right)_{2}}\sum_{i_{2}^{n}}k\left(r_{i_{1}},r_{i_{2}}\right)\right|\\
    	&+C_2\left|\mathbb{E}_{x,x^{\prime}}\left[l\left(x,x^{\prime}\right)\right]-\frac{1}{\left(n\right)_{2}}\sum_{i_{2}^{n}}l\left(x_{i_{1}},x_{i_{2}}\right)\right|,
        \end{split}
    \end{equation*}
}
where the first inequality follows from properties of $\sup$, and the last inequality follows from the definition of $C_1$ and $C_2$, and the definitions of $(n)_m$.
And finally, the same reasoning can be applied to bound \eqref{term_3}:
{\small
    \begin{equation*}
        \begin{split}
	     &\sup_{h\in\mathcal{H}}\left|\vphantom{\sum_{i_{2}^{n}}}\mathbb{E}_{r,r^{\prime}}\left[k\left(r,r^{\prime}\right)\right]\mathbb{E}_{x,x^{\prime}}\left[l\left(x,x^{\prime}\right)\right]\right.\\
	     &\left.-\frac{1}{\left(n\right)_{4}}\sum_{i_{4}^{n}}k\left(r_{i_{1}},r_{i_{2}}\right)l\left(x_{i_{3}},x_{i_{4}}\right)\right| \\
        = &\sup_{h\in\mathcal{H}}\left|\mathbb{E}_{r,r^{\prime}}\left[k\left(r,r^{\prime}\right)\right]\mathbb{E}_{x,x^{\prime}}\left[l\left(x,x^{\prime}\right)\right]\right.\\
        &\left.-\frac{1}{\left(n\right)_{4}}\sum_{i_{4}^{n}}k\left(r_{i_{1}},r_{i_{2}}\right)l\left(x_{i_{3}},x_{i_{4}}\right)\right.\\
    	&+\left.{}\frac{1}{\left(n\right)_{4}}\sum_{i_{4}^{n}}k\left(r_{i_{1}},r_{i_{2}}\right)\mathbb{E}_{x,x^{\prime}}\left[l\left(x,x^{\prime}\right)\right]\right.\\
    	&\left.-\frac{1}{\left(n\right)_{4}}\sum_{i_{4}^{n}}k\left(r_{i_{1}},r_{i_{2}}\right)\mathbb{E}_{x,x^{\prime}}\left[l\left(x,x^{\prime}\right)\right]\right| \\
    	= &\sup_{h\in\mathcal{H}}\left|\left(\mathbb{E}_{r,r^{\prime}}\left[k\left(r,r^{\prime}\right)\right]-\frac{1}{\left(n\right)_{4}}\sum_{i_{4}^{n}}k\left(r_{i_{1}},r_{i_{2}}\right)\right)\mathbb{E}_{x,x^{\prime}}\left[l\left(x,x^{\prime}\right)\right]\right.\\
    	&\left.+\frac{1}{\left(n\right)_{4}}\sum_{i_{4}^{n}}k\left(r_{i_{1}},r_{i_{2}}\right)\left(\mathbb{E}_{x,x^{\prime}}\left[l\left(x,x^{\prime}\right)\right]-l\left(x_{i_{3}},x_{i_{4}}\right)\right)\right| \\
    	\le &\sup_{h\in\mathcal{H}}\left|\left(\mathbb{E}_{r,r^{\prime}}\left[k\left(r,r^{\prime}\right)\right]-\frac{1}{\left(n\right)_{4}}\sum_{i_{4}^{n}}k\left(r_{i_{1}},r_{i_{2}}\right)\right)\mathbb{E}_{x,x^{\prime}}\left[l\left(x,x^{\prime}\right)\right]\right|\\
    	&+\sup_{h\in\mathcal{H}}\left|\frac{1}{\left(n\right)_{4}}\sum_{i_{4}^{n}}k\left(r_{i_{1}},r_{i_{2}}\right)\left(\mathbb{E}_{x,x^{\prime}}\left[l\left(x,x^{\prime}\right)\right]-l\left(x_{i_{3}},x_{i_{4}}\right)\right)\right| \\
    	\le &C_1\sup_{h\in\mathcal{H}}\left|\left(\mathbb{E}_{r,r^{\prime}}\left[k\left(r,r^{\prime}\right)\right]-\frac{1}{\left(n\right)_{2}}\sum_{i_{2}^{n}}k\left(r_{i_{1}},r_{i_{2}}\right)\right)\right|\\
    	&+C_2\left|\mathbb{E}_{x,x^{\prime}}\left[l\left(x,x^{\prime}\right)\right]-\frac{1}{\left(n\right)_{2}}\sum_{i_{2}^{n}}l\left(x_{i_{1}},x_{i_{2}}\right)\right|.
     \end{split}
    \end{equation*}
    }
\end{proof}

Now, the second term of the RHS of the bound in Lemma \ref{learnability lemma} can be bounded using standard techniques such as Hoeffding's inequality. We therefore shift our attention to the first term. This term can be bounded using Rademacher based techniques.

Let us first recall the definition of the Rademacher complexity of a function class.
\begin{definition}
Let $\mathcal{D}$ be a distribution over $Z$, and let $S=\{z_i\}_{i=1}^n$ be $n$ i.i.d. samples. The empirical Rademacher complexity of a function class $\mathcal{F}$ is defined as:
\begin{equation*}
    \mathcal{R}_n(\mathcal{F}) = \E_\sigma\left[\sup_{f\in \mathcal{F}} \frac{1}{n} \sum_{i=1}^n \sigma_i h(z_i)\right]. 
\end{equation*}
\end{definition}

We assume that $k\left(r,r^{\prime}\right)=s\left(r-r^{\prime}\right)$ for some function $s$ with Lipschitz constant $L_{s}$. Next, we define a distribution over $\mathcal{X}\times\mathcal{X}\times\mathcal{Y}$ by $p^{\prime}\left(\boldsymbol{x}\right)= p^{\prime}\left(\left(x,x^{\prime}\right)\right)=p\left(x\right)p\left(x^{\prime}\right)$, 
and we let
$y\left(x,x^{\prime},\varepsilon,\varepsilon^{\prime}\right)=f\left(x\right)-f\left(x^{\prime}\right)+\varepsilon-\varepsilon^{\prime}$. Now, we can define a new function class $$\mathcal{H}^{2} = \left\{ \left(x_{1},x_{2}\right)\mapsto h\left(x_{1}\right)-h\left(x_{2}\right)\mid h\in\mathcal{H}\right\},$$ 
and consider $$\sup_{h\in\mathcal{H}^{2}}\left|\mathbb{E}_{\boldsymbol{x},y}\ell\left(h\left(\boldsymbol{x}\right),y\right) - \sum_{i=1}^n \ell(h(\textbf{x}_i),y_i)\right|$$ 
where $\ell\left(h\left(\boldsymbol{x}\right),y\right)=s\left(r_{1}-r_{2}\right)$. This is exactly the first term in the bound, which can be bounded using standard generalization bounds. The only missing pieces left are how to relate the Rademacher complexity of $\mathcal{H}^{2}$ to that of $\mathcal{H}$, and how the Lipschitz constant of the residuals' kernel affects it.
\begin{lemma}\label{rademacher_lemma}
$\mathcal{R}_n(\mathcal{H}^2) \le 2\mathcal{R}_n(\mathcal{H})$.
\end{lemma}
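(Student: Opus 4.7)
The plan is to expand the definition of $\mathcal{R}_n(\mathcal{H}^2)$, split the pair-valued supremand into two single-argument pieces, and then bound each piece by $\mathcal{R}_n(\mathcal{H})$ using subadditivity of the supremum and the symmetry of the Rademacher distribution.

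More concretely, writing $\mathbf{x}_i = (x_{1,i}, x_{2,i})$ for the $n$ i.i.d. samples drawn from $p'$, I would start from
\begin{equation*}
  \mathcal{R}_n(\mathcal{H}^2)
  = \mathbb{E}_\sigma \sup_{h \in \mathcal{H}} \frac{1}{n}\sum_{i=1}^n \sigma_i\bigl(h(x_{1,i}) - h(x_{2,i})\bigr),
\end{equation*}
and use $\sup(A-B) \le \sup A + \sup(-B)$ together with linearity of $\mathbb{E}_\sigma$ to obtain
\begin{equation*}
  \mathcal{R}_n(\mathcal{H}^2)
  \le \mathbb{E}_\sigma \sup_{h\in\mathcal{H}} \frac{1}{n}\sum_{i=1}^n \sigma_i h(x_{1,i})
    + \mathbb{E}_\sigma \sup_{h\in\mathcal{H}} \frac{1}{n}\sum_{i=1}^n (-\sigma_i) h(x_{2,i}).
\end{equation*}

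Each of the two terms is now of the form of a Rademacher complexity evaluated on a genuine i.i.d. sample from $p$: in the first term on $\{x_{1,i}\}_{i=1}^n$, in the second on $\{x_{2,i}\}_{i=1}^n$, since the marginals of $p'$ on either coordinate are exactly $p$. The sign flip $\sigma_i \mapsto -\sigma_i$ in the second term is absorbed by the fact that $(-\sigma_1,\dots,-\sigma_n)$ has the same distribution as $(\sigma_1,\dots,\sigma_n)$, so the second term also equals $\mathcal{R}_n(\mathcal{H})$. Summing yields the claimed bound $\mathcal{R}_n(\mathcal{H}^2) \le 2\mathcal{R}_n(\mathcal{H})$.

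There is really no hard step here; the only subtlety to be careful about is whether $\mathcal{R}_n$ is interpreted as the empirical Rademacher complexity on a fixed sample or as its expectation over the data distribution. In the empirical version one needs to observe that both $\{x_{1,i}\}$ and $\{x_{2,i}\}$ are valid $n$-samples for evaluating the complexity of $\mathcal{H}$, and absorb the dependence on the specific pairs by taking expectation over $p'^n$ up front (so the final inequality is really stated for the expected Rademacher complexities). The argument is otherwise a direct two-line manipulation using subadditivity of the supremum and the symmetry of $\sigma$.
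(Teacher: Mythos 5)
Your proposal is correct and follows essentially the same route as the paper's proof: expand the definition, split the difference $h(x_{1,i})-h(x_{2,i})$ via subadditivity of the supremum, and identify each resulting term as $\mathcal{R}_n(\mathcal{H})$. If anything, you are slightly more careful than the paper, which silently drops the minus sign on the second term where you explicitly invoke the symmetry of the Rademacher distribution to justify it.
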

\begin{proof}
Let $S=\left\{(z_{i_1},z_{i_2})\right\}_i=1^n$. Then,
\begin{align*}
    \mathcal{R}_n(\mathcal{H}^2) =& \mathbb{E}_\sigma\left[\sup_{h\in\mathcal{H}^2}\frac{1}{n}\sum_{i=1}^n\sigma_ih(s_i)\right] \\
    =&\mathbb{E}_\sigma\left[\sup_{h\in\mathcal{H}}\frac{1}{n}\sum_{i=1}^n\sigma_i\left(h(z_{i_1})-h(z_{i_2})\right)\right] \\
    \le&\mathbb{E}_\sigma\left[\sup_{h\in\mathcal{H}}\frac{1}{n}\sum_{i=1}^n\sigma_ih(z_{i_1})+\sup_{h\in\mathcal{H}}\frac{1}{n}\sum_{i=1}^n\sigma_ih(z_{i_2})\right] \\
    = & 2\mathcal{R}_n(\mathcal{H}),
\end{align*}
where the inequality is due to algebra of $\sup$ and sums.
\end{proof}

As for the Lipschitz constant, a known result relating the Rademacher complexity of a function class to the Rademacher complexity of the class composed with a Lipschitz loss is the following.
\begin{theorem}[\citet{Rademacher_Composition}]\label{lipschitz}
    Let $\ell:\,\mathbb{R}\times\mathcal{Y}\to\mathbb{R}$ be s.t. $\ell(\cdot, y)$ is an $L-$Lipschitz function for all $y$. Denote $\ell\circ\mathcal{H}=\left\{(x,y)\to\ell(h(x),y) \mid h\in \mathcal{H}\right\}$. Then,
    \begin{equation*}
        \mathcal{R}_n(\ell\circ\mathcal{H}) \le L\cdot \mathcal{R}_n(\mathcal{H}).
    \end{equation*}
\end{theorem}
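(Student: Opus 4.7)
This is the classical Ledoux--Talagrand contraction principle. The plan is to prove it by conditioning on the sample $S=\{(x_i,y_i)\}_{i=1}^n$ and peeling off one Rademacher variable at a time. Fixing $S$ and taking expectations only over $\sigma=(\sigma_1,\dots,\sigma_n)$, it suffices to show
\begin{equation*}
\mathbb{E}_\sigma\sup_{h\in\mathcal{H}}\frac{1}{n}\sum_{i=1}^n\sigma_i\,\ell(h(x_i),y_i)\le L\cdot\mathbb{E}_\sigma\sup_{h\in\mathcal{H}}\frac{1}{n}\sum_{i=1}^n\sigma_i\,h(x_i),
\end{equation*}
since the overall Rademacher complexities are the expectations of these two quantities over $S$.

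The key step is to show that for any fixed index $j$, we can replace $\sigma_j\ell(h(x_j),y_j)$ in the sup by $L\sigma_j h(x_j)$ without decreasing the expectation. I would condition on all Rademacher variables except $\sigma_j$ and denote by $A(h)$ the partial sum over $i\ne j$ (which has no dependence on $\sigma_j$). Then, using $\mathbb{E}_{\sigma_j}$ over the two equally likely signs,
\begin{align*}
&\mathbb{E}_{\sigma_j}\sup_h\bigl[A(h)+\sigma_j\ell(h(x_j),y_j)\bigr]\\
&\qquad=\tfrac{1}{2}\sup_{h,h'}\bigl[A(h)+A(h')+\ell(h(x_j),y_j)-\ell(h'(x_j),y_j)\bigr].
\end{align*}
By swapping the roles of $h,h'$ if needed, I may assume the second bracket is non-negative, and the $L$-Lipschitz assumption on $\ell(\cdot,y_j)$ gives $\ell(h(x_j),y_j)-\ell(h'(x_j),y_j)\le L\,(h(x_j)-h'(x_j))$. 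Plugging this back and splitting the sup,
\begin{equation*}
\tfrac{1}{2}\sup_{h,h'}\bigl[A(h)+A(h')+L\,h(x_j)-L\,h'(x_j)\bigr]=\mathbb{E}_{\sigma_j}\sup_h\bigl[A(h)+L\sigma_j h(x_j)\bigr].
\end{equation*}
Thus one term $\sigma_j\ell(h(x_j),y_j)$ has been replaced by $L\sigma_j h(x_j)$.

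Iterating this argument for $j=1,2,\dots,n$ (each time absorbing the already-replaced $L\sigma_i h(x_i)$ terms into a new $A(h)$ that is constant in $\sigma_j$) converts the left-hand side into $\mathbb{E}_\sigma\sup_h\sum_i L\sigma_i h(x_i)/n$, which equals $L\cdot\mathcal{R}_n(\mathcal{H})$ after taking the outer expectation over $S$. Dividing by $n$ throughout and taking the final expectation over the sample gives the stated bound. The main obstacle is the step that turns the Lipschitz inequality (which is only a bound on the absolute difference) into a signed inequality compatible with the sup; this is precisely handled by the symmetry of $\sigma_j$'s distribution together with the freedom to relabel $h\leftrightarrow h'$. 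Everything else is bookkeeping.
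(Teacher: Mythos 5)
The paper itself offers no proof of this statement: it is imported as a known result (Talagrand's contraction lemma) from the cited reference and merely invoked in the appendix when assembling the proof of Theorem \ref{leanability_thm}, so there is no in-paper argument to compare against. What you have written is the standard Ledoux--Talagrand contraction proof --- condition on the sample, expand the expectation over a single $\sigma_j$ as $\tfrac12\sup_{h,h'}\bigl[A(h)+A(h')+\ell(h(x_j),y_j)-\ell(h'(x_j),y_j)\bigr]$, use the Lipschitz property together with the $h\leftrightarrow h'$ relabeling symmetry to replace the $\ell$-difference by $L\bigl(h(x_j)-h'(x_j)\bigr)$, re-fold into an expectation over $\sigma_j$, and iterate over $j$ --- and it is essentially correct; this is exactly the argument found in, e.g., the Mohri et al.\ textbook that the paper cites elsewhere. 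One small imprecision: assuming the difference $\ell(h(x_j),y_j)-\ell(h'(x_j),y_j)$ is non-negative only yields the bound $L\,\lvert h(x_j)-h'(x_j)\rvert$, not the signed bound $L\bigl(h(x_j)-h'(x_j)\bigr)$; the relabeling freedom should instead be used to drop the absolute value on $h(x_j)-h'(x_j)$ itself (the expression $A(h)+A(h')$ is symmetric under swapping, so the suprema over the two orderings coincide and one may restrict to $h(x_j)\ge h'(x_j)$). This is a recoverable bookkeeping detail rather than a gap in the idea, and the iteration over $j$ and the final averaging over the sample are fine.
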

As an example, the following lemma proves the Lipschitz condition for RBF kernels.
\begin{lemma} \label{rbf_lemma}
Assume $\ell(z,y)=\exp(-\gamma(z-y)^2)$, as is the case with RBF kernels, and suppose $|y|\le \frac{M}{2}$ for some $M>0$ for all $y$. Then $\ell(\cdot,y)$ is $\gamma M-$Lipschitz for all $y$. 
\end{lemma}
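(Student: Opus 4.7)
The plan is elementary: since $\ell(z,y)=\exp(-\gamma(z-y)^2)$ is smooth in its first argument, I would obtain the Lipschitz constant by uniformly bounding the partial derivative $\partial_z \ell$ and then invoking the fundamental theorem of calculus (or equivalently the mean value theorem).

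First, I would fix an arbitrary $y$ with $|y|\le M/2$ and compute
\[
    \partial_z \ell(z,y) = -2\gamma(z-y)\exp\!\left(-\gamma(z-y)^2\right).
\]
Then I would bound the absolute value using the trivial inequality $\exp(-\gamma u^{2})\le 1$, which yields $|\partial_z \ell(z,y)| \le 2\gamma|z-y|$. The next step is to control $|z-y|$ using the range constraints on the arguments. By hypothesis $|y|\le M/2$, and in the intended application within Theorem \ref{leanability_thm} the first argument $z$ plays the role of a difference of residuals (or of predictions) lying in the same bounded range, so $|z|\le M/2$ as well; thus $|z-y|\le |z|+|y|\le M$. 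Combined with the derivative bound this gives $|\partial_z \ell(z,y)|\le 2\gamma M$, and a direct application of
\[
    |\ell(z_1,y)-\ell(z_2,y)| \le \sup_{\xi}|\partial_z \ell(\xi,y)|\cdot |z_1-z_2|
\]
yields the Lipschitz property.

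The only subtle point is tracking the precise numerical constant. The naive argument above produces $2\gamma M$-Lipschitz, so matching the exactly claimed constant $\gamma M$ requires a slightly sharper argument: either one uses the tighter identity $|e^{-a}-e^{-b}|\le |a-b|$ for $a,b\ge 0$ applied to $a=\gamma(z_1-y)^2,\, b=\gamma(z_2-y)^2$ and then exploits that the midpoint quantity $|z_1+z_2-2y|$ is bounded by $M$ when $z_1,z_2,y$ all lie in $[-M/2,M/2]$, or one exploits the trade-off between the prefactor $|z-y|$ and the Gaussian factor $\exp(-\gamma(z-y)^2)$ that is strictly smaller than $1$ off the diagonal. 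I do not anticipate any genuine obstacle beyond this bookkeeping of constants; the estimate is a routine one-variable calculus exercise once the range of $z-y$ has been pinned down from the ambient assumptions.
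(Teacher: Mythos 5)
Your route is genuinely different from the paper's: you bound the partial derivative $\partial_z\ell(z,y)=-2\gamma(z-y)e^{-\gamma(z-y)^2}$ and invoke the mean value theorem, whereas the paper factors out $e^{-\gamma(y-z_1)^2}$ and applies $1+x\le e^x$ to the remaining exponential. Both are first-order bounds on the exponential, and both, carried out honestly, yield the constant $2\gamma M$ rather than the claimed $\gamma M$: your derivative bound gives $|\partial_z\ell|\le 2\gamma|z-y|\le 2\gamma M$, and the paper's chain, once the difference of squares is expanded correctly, gives $\gamma|(z_1-y)^2-(z_2-y)^2|=\gamma|z_1-z_2|\,|z_1+z_2-2y|\le 2\gamma M|z_1-z_2|$. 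You are also right to flag that some bound on $z$ must be imported from the application: with $z$ unrestricted the global Lipschitz constant of the Gaussian bump is $\sqrt{2\gamma/e}$ independently of $M$, so the lemma as literally stated (only $|y|\le M/2$) is not correct for small $M$; the paper's proof likewise silently uses $|z_1-z_2|\le M$ in its last line.

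The genuine gap is that neither of your two proposed repairs of the factor of $2$ actually works. For the first, $|z_1+z_2-2y|$ is bounded by $2M$, not $M$, when $z_1,z_2,y\in[-M/2,M/2]$ (take $z_1=z_2=M/2$ and $y=-M/2$), so the identity $|e^{-a}-e^{-b}|\le|a-b|$ still lands on $2\gamma M$. For the second, exploiting the damping factor $e^{-\gamma(z-y)^2}$ gives the $M$-free bound $\sup_u 2\gamma|u|e^{-\gamma u^2}=\sqrt{2\gamma/e}$, which exceeds $\gamma M$ whenever $M<\sqrt{2/(\gamma e)}$, so it cannot recover the stated constant in general. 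To be fair, the paper's own proof reaches $\gamma M$ only via the step $\|y-z_2\|^2-\|y-z_1\|^2\le\|z_1-z_2\|^2$, attributed to the triangle inequality but false in general (it requires $\langle y-z_1,\,z_1-z_2\rangle\le 0$); the corrected version of that step also gives $2\gamma M$. So your argument is at least as rigorous as the paper's; the right fix is simply to state the conclusion as $2\gamma M$-Lipschitz, which is harmless downstream since the Lipschitz constant only rescales the Rademacher term in Theorem \ref{leanability_thm}.
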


\begin{proof}
Let $\ell(z,y)=\exp\left(-\gamma\|z-y\|^2\right)$ for some $\gamma>0$, and suppose $\|y\|\le\frac{M}{2}$ for some $M>0$ for all $y\in\mathcal{Y}$. Then,
\begin{align*}
    &\exp\left(-\gamma\|y-z_1\|^2\right) - \exp\left(-\gamma\|y-z_2\|^2\right) \\
    =&\exp\left(-\gamma\|y-z_1\|^2\right)\left(1-\exp\left(\gamma\left(\|y-z_1\|^2-\|y-z_2\|^2\right)\right)\right) \\
    \le & \exp\left(-\gamma\|y-z_1\|^2\right)\left(1-\left(1+\gamma\left(\|y-z_1\|^2-\|y-z_2\|^2\right)\right)\right) \\
    =&\exp\left(-\gamma\|y-z_1\|^2\right)\left(\gamma\left(\|y-z_2\|^2-\|y-z_1\|^2\right)\right) \\
    \le & \gamma\left(\|y-z_2\|^2-\|y-z_1\|^2 \right) \\
    \le & \gamma\|z_1-z_2\|^2\\
    \le &\gamma \cdot M\|z_1-z_2\|
\end{align*}
where the first inequality is due to the fact that $1+x\le e^x$, the second is due to the fact that $\exp(-c) \le 1 \, \forall c\ge0$, the third is due to triangle inequality and the last is due to the definition of $M$.
\end{proof}
Before concluding, we state the  uniform convergence result based on the Rademacher complexity of a class.
\begin{theorem}[\citet{mohri2018foundations}, Ch. 3]
    Suppose $f(z)\in[0,1]$ for all $f\in \mathcal{F}$, and let $\delta \in (0,1)$. Then, with probability of at least $ 1 - \delta$ over the choice of $S$, the following holds uniformly for all $f \in \mathcal{F}$:
    \begin{equation*}
        \left|\E_\mathcal{P}[f(z)] - \frac{1}{n}\sum_{i=1}^n f(z_i)\right| \le 2\mathcal{R}_n(\mathcal{F}) + O\left(\sqrt{\frac{\ln(1/\delta)}{n}}\right).
    \end{equation*}
    \label{rademacher_bound}
\end{theorem}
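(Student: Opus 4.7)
The plan is to follow the standard reduction of uniform HSIC concentration to a kernel-dependent U-statistic concentration problem, combined with a Rademacher-complexity argument applied to a hypothesis class of pairwise differences.

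\textbf{Step 1: Decompose HSIC into V-statistic components.} Following \citet{gretton2005measuring}, write both the population HSIC and the empirical estimator $\widehat{\text{HSIC}}$ as sums of three terms corresponding to $\mathbb{E}[k(r,r')l(x,x')]$, the cross term $\mathbb{E}_{x,r}[\mathbb{E}_{x'}[l]\,\mathbb{E}_{r'}[k]]$, and the product $\mathbb{E}[k]\cdot\mathbb{E}[l]$. Apply the triangle inequality and then, inside each of the three differences, add and subtract a mixed term (e.g.\ $\tfrac{1}{(n)_2} \sum k(r_{i_1},r_{i_2})\,\mathbb{E}[l(x,x')]$) so that each difference splits into two pieces: one where $l$ appears only in expectation (and so is uniformly bounded by $C_1$), and one where $k$ appears only in expectation (so its coefficient is bounded uniformly by $C_2$). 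Collecting yields the master bound
\[
\sup_{h}\!\left|\text{HSIC} - \widehat{\text{HSIC}}\right| \le 3C_1 \sup_h\! \left|\mathbb{E}[k(r,r')] - \tfrac{1}{(n)_2}\!\sum k(r_{i_1},r_{i_2})\right| + 3C_2 \left|\mathbb{E}[l(x,x')] - \tfrac{1}{(n)_2}\!\sum l(x_{i_1},x_{i_2})\right|,
\]
where only the first term depends on $h$.

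\textbf{Step 2: Bound the $h$-independent term by Hoeffding.} The second term is a U-statistic of a bounded symmetric kernel $l \in [0,C_1]$, so by Hoeffding's inequality for U-statistics (which for degree-2 statistics reduces to standard Hoeffding at rate $1/\sqrt{n}$), it is at most $C_1 \sqrt{\ln(2/\delta)/(2n)}$ with probability at least $1-\delta/2$. This accounts for the $3C_2 C_1 \sqrt{\ln(2/\delta)/(2n)}$ tail in the final bound.

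\textbf{Step 3: Reformulate the $h$-dependent term as a standard ERM problem.} Use the assumed form $k(r,r') = \iota(h(x)-h(x'),\,y-y')$ to rewrite the sup term. Introduce the paired hypothesis class $\mathcal{H}^2 = \{(x_1,x_2)\mapsto h(x_1)-h(x_2) : h\in\mathcal{H}\}$ and the pair-labelled loss $\ell(g,(x_1,x_2),(y_1,y_2)) = \iota(g(x_1,x_2),\,y_1-y_2)$. To handle the U-statistic structure, randomly partition the sample into $\lfloor n/2 \rfloor$ disjoint pairs, giving an i.i.d.\ estimator whose concentration implies concentration of the full U-statistic at the same rate (and the uniform-in-$h$ version inherits this by averaging over pairings). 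This reduces the sup term to a standard empirical-risk deviation over $\mathcal{H}^2$.

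\textbf{Step 4: Combine Rademacher tools.} Apply Lemma \ref{rademacher_lemma} to get $\mathcal{R}_n(\mathcal{H}^2) \le 2\mathcal{R}_n(\mathcal{H})$. Since $\iota(\cdot,y)$ is $L_\iota$-Lipschitz uniformly in $y$, Theorem \ref{lipschitz} gives $\mathcal{R}_n(\ell \circ \mathcal{H}^2) \le L_\iota \mathcal{R}_n(\mathcal{H}^2) \le 2L_\iota \mathcal{R}_n(\mathcal{H})$. Plugging into Theorem \ref{rademacher_bound} (valid because $k \in [0,1]$), with probability at least $1-\delta/2$,
\[
\sup_h\!\left|\mathbb{E}[k(r,r')] - \tfrac{1}{(n)_2}\!\sum k(r_{i_1},r_{i_2})\right| \le 2\mathcal{R}_n(\ell\circ\mathcal{H}^2) + O\!\left(\sqrt{\tfrac{\ln(1/\delta)}{n}}\right) \le 4L_\iota \mathcal{R}_n(\mathcal{H}) + O\!\left(\sqrt{\tfrac{\ln(1/\delta)}{n}}\right).
\]
A union bound over the two events of Steps 2 and 4 and multiplication by the $3C_1, 3C_2$ factors from Step 1 yields the claimed inequality.

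\textbf{Main obstacle.} The hardest step is the combinatorial bookkeeping in Step 1: each of the three HSIC components mixes $k$ and $l$ differently (with quadruple/triple/double sums $(n)_4, (n)_3, (n)_2$), and one must reroute all three onto the same pair of master quantities while keeping the multiplicative constants as $3C_1$ and $3C_2$ rather than something sample-size-dependent. A secondary technical point is the U-statistic-to-i.i.d.\ reduction in Step 3, which is needed because the stated Rademacher bound (Theorem \ref{rademacher_bound}) is phrased for sample means of i.i.d.\ samples, not for pairwise sums; the random-pairing device resolves this without affecting the asymptotic rate.
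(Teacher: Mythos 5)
Your proposal does not prove the statement in question. The statement here is the generic uniform-convergence bound via Rademacher complexity (cited from Mohri et al., Ch.~3): for \emph{any} class $\mathcal{F}$ of $[0,1]$-valued functions and an i.i.d.\ sample, $\sup_{f\in\mathcal{F}}\left|\E_{\mathcal{P}}[f(z)]-\frac{1}{n}\sum_{i=1}^n f(z_i)\right|\le 2\mathcal{R}_n(\mathcal{F})+O\left(\sqrt{\ln(1/\delta)/n}\right)$ with probability $1-\delta$. What you have written is instead a proof sketch of the paper's HSIC learnability result (Theorem~\ref{leanability_thm}), and in your Step~4 you explicitly \emph{invoke} Theorem~\ref{rademacher_bound} as a black box. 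As an argument for the target statement this is circular: you assume the very inequality you are asked to establish. None of your steps (the HSIC three-term decomposition, the Hoeffding bound on the $h$-independent U-statistic, the pairing device, the Lipschitz contraction) addresses why the supremum of the centered empirical process over $\mathcal{F}$ concentrates around a quantity controlled by $2\mathcal{R}_n(\mathcal{F})$.

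A correct proof of the stated theorem is the classical two-step argument: (i) apply McDiarmid's bounded-differences inequality to $\Phi(S)=\sup_{f\in\mathcal{F}}\left(\E_{\mathcal{P}}[f(z)]-\frac{1}{n}\sum_{i=1}^n f(z_i)\right)$, noting that replacing one sample changes $\Phi$ by at most $1/n$ since $f(z)\in[0,1]$, which gives the $O\left(\sqrt{\ln(1/\delta)/n}\right)$ deviation term; and (ii) bound $\E_S[\Phi(S)]$ by $2\mathcal{R}_n(\mathcal{F})$ via symmetrization with a ghost sample and Rademacher signs (with a second application of McDiarmid, or a union bound, to pass from the expected to the empirical Rademacher complexity and to handle the absolute value by repeating the argument for $-\Phi$). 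The paper itself does not reprove this; it is quoted as a standard result, and your text belongs to the proof of Theorem~\ref{leanability_thm}, not here.
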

Equipped with those results, the learnability Theorem \ref{leanability_thm} immediately follows. 
\begin{proof}[Proof of Theorem \ref{leanability_thm}]
This is a direct application of the previous lemmas and Hoeffding's inequality.
\end{proof}

\subsection{Experiments' details}
\subsubsection{Synthetic data}
In order to find the $l_2$ regularization parameter, we perform cross validation on validation data created from $10\%$ of the training set, where the possible values are in $\{15, 12, 10, 5, 1, 0.1, 0.01, 0.001, 0.0001, 0.00001, 0.000001,0\}$, for absolute- and HSIC-losses, and $\{35,37,...,69\}$ for the squared-loss. 

When training with absolute-loss, we used stochastic gradient descent, with initial learning rate determined by cross validation from $\{0.05, 0.01, 0.005, 0.001, 0.0005, 0.0001,0.00005,0.00001\}$, and later decayed using inverse scaling. When training with HSIC-loss, the learning rate was drawn from a uniform distribution over $[0.0001,0.0002]$. 

\subsubsection{Rotated MNIST}
\begin{wrapfigure}{r}{0.17\textwidth}
\centering
\includegraphics[scale=0.22]{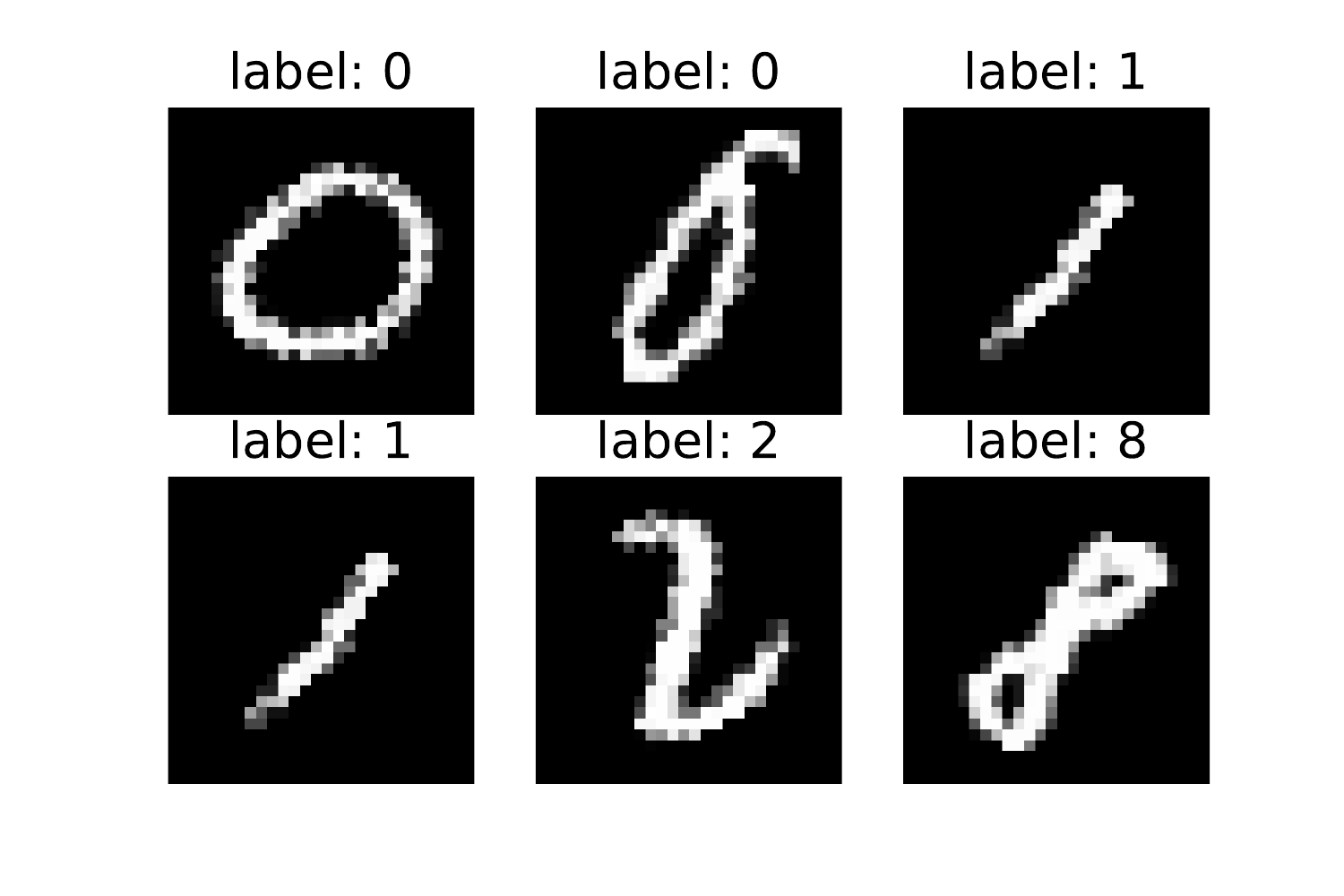}
\caption{MNIST \textsc{target} images.}
% \vspace*{-3mm}
\label{rotated_mnist_images}
% \vspace{-10pt}
\end{wrapfigure}
Both losses were optimized using Adam \citep{Kingma2015AdamAM}, and the learning rate was drawn each time from a uniform distribution over $[10^{-4}, 4\cdot 10^{-4}]$. Experimenting with different regimes of the learning rate gave similar results.

\end{document}